\documentclass{article}
\PassOptionsToPackage{square, numbers, compress}{natbib}

\usepackage[preprint]{neurips_2026}

\usepackage[utf8]{inputenc} 
\usepackage[T1]{fontenc}    
\usepackage{hyperref}       
\usepackage{url}            
\usepackage{booktabs}       
\usepackage{amsfonts}       
\usepackage{nicefrac}       
\usepackage{microtype}      
\usepackage{xcolor}         

\usepackage{mathtools}
\usepackage[capitalize]{cleveref}
\usepackage{xspace}
\usepackage{algorithm}
\usepackage{algpseudocode}
\usepackage{xparse}
\usepackage{amsthm}
\usepackage{amsmath}
\usepackage[createShortEnv,conf={restate, no link to proof, text proof=Proof}]{proof-at-the-end}
\usepackage[textsize=tiny,disable]{todonotes}
\usepackage{acro}
\usepackage{tabularx}
\usepackage[makeroom]{cancel}
\usepackage{siunitx}
\usepackage{csquotes}
\usepackage{makecell}
\usepackage{wrapfig}
\usepackage{subcaption}
\usepackage{placeins}
\usepackage{enumitem}

 \NewAcroTemplate[list]{tabularx}{%
\AcroNeedPackage{tabularx}%
\acronymsmapF{%
\AcroAddRow{
\textbf{\acrowrite{short}}%
\acroifT{alt}{/\acrowrite{alt}}
&
\acrowrite{list}%
\acroifanyT{foreign,extra}{ (}%
\acroifT{foreign}{\acrowrite{foreign}\acroifT{extra}{, }}%
\acroifT{extra}{\acrowrite{extra}}%
 \acroifanyT{foreign,extra}{)}%
\acropagefill
\acropages
{\acrotranslate{page}\nobreakspace}
{\acrotranslate{pages}\nobreakspace}%
 \tabularnewline
 }%
 }
 {\AcroRerun}%
 \acroheading
\acropreamble
\par\noindent
\begin{tabularx}{\linewidth}{lX}
\AcronymTable
\end{tabularx}
}

\DeclareAcronym{licq}{short={LICQ}, long={Linear Independent Constraint Qualifier}, extra={see \cref{ass:licq}}}
\DeclareAcronym{mcmc}{short=MCMC, long={Markov Chain Monte Carlo}}
\DeclareAcronym{smc}{short=SMC, long={Sequential Monte Carlo}}
\DeclareAcronym{scmc}{short=SCMC, long={Sequentially Constrained Monte Carlo}, extra={constrained sampler, introduced in \citep{golchi2016sequentially}}}
\DeclareAcronym{nhr}{short=NHR, long={Non-linear hit \& run}, extra={constrained sampler, introduced in \citep{toussaint2024nlp}}}
\DeclareAcronym{olla}{short=OLLA, long={Overdamped Langevin with Landing}, extra={constrained sampler, introduced in \citep{jeon2025fast}}}
\DeclareAcronym{masem}{short=MASEM, long={Manifold Sampling via Entropy Maximization}, extra={introduced in this paper}}
\DeclareAcronym{svdg}{short=SVGD, long={Stein Variational Gradient Descent}, extra={introduced in \citep{qiang2016stein}}}
\DeclareAcronym{CGF}{short=CGF, long=Cumulant Generating Function}

\newcommand{\licq}{%
  \hyperref[ass:licq]{\ac{licq}}\xspace%
}

\newcommand{\tb}[1]{\todo[color=blue!25]{TB: #1}}

  \renewcommand{\a}{\alpha}
  \renewcommand{\b}{\beta}
  \renewcommand{\d}{\delta}

  \renewcommand{\l}{\lambda}
  
    \newcommand{\m}{\mu}

  \renewcommand{\k}{\kappa}
  \renewcommand{\o}{\omega}
  \renewcommand{\O}{\Omega}

  \renewcommand{\r}{\varrho}
    \newcommand{\s}{\sigma}
  
  \renewcommand{\t}{\theta}

  \renewcommand{\SS}{{\cal S}}

    \newcommand{\Hent}{{\text H}}

  \newcommand{\RRR}{{\mathbb{R}}}

  \newcommand{\vx}{\mathbf{x}}

  \renewcommand{\|}{\,|\,}
  
  \renewcommand{\=}{\!=\!}

  \newcommand{\tsum}{\textstyle\sum}

  \renewcommand{\vec}{\boldsymbol}
  \providecommand{\url}[2]{\href{#1}{{\color{blue}#2}}}

  \newcommand{\fullhide}[1]{\message{^^JHIDE--Warning (hidden)!^^J}}
  \newcommand{\mytitle}[1]{\title{#1}\newcommand{\thetitle}{#1}}
  \newcommand{\header}{\begin{document}\mytitle\cleardefs}
  \newcommand{\contents}{{\tableofcontents}\renewcommand{\contents}{}}
  
  \newcommand{\widepaper}{\usepackage{geometry}\geometry{a4paper,hdivide={25mm,*,25mm},vdivide={25mm,*,25mm}}}
  \newcommand{\moviex}[2]{\movie[externalviewer]{#1}{#2}} 
  \newcommand{\rbox}[1]{\fboxrule2mm\fcolorbox[rgb]{1,.85,.85}{1,.85,.85}{#1}}
  \newcommand{\mpage}[2]{{\begin{minipage}{#1\columnwidth}#2\end{minipage}}}
  \newcommand{\redbox}[2]{\fboxrule1mm\fcolorbox[rgb]{1,.7,.7}{1,.7,.7}{\begin{minipage}{#1\columnwidth}\center#2\end{minipage}}}
  \newcommand{\onecol}[2]{
    \begin{minipage}[c]{#1\columnwidth}#2\end{minipage}}
  \newcommand{\twocol}[5][0]{
    \begin{minipage}[c]{#2\columnwidth}#4\end{minipage}\hspace*{#1\columnwidth}%
    \begin{minipage}[c]{#3\columnwidth}#5\end{minipage}}
  \newcommand{\threecol}[7][0]{%
    \begin{minipage}[c]{#2\columnwidth}#5\end{minipage}\hspace*{#1\columnwidth}%
    \begin{minipage}[c]{#3\columnwidth}#6\end{minipage}\hspace*{#1\columnwidth}%
    \begin{minipage}[c]{#4\columnwidth}#7\end{minipage}}
  \newcommand{\threecoltext}[7][c]{
    \begin{minipage}[#1]{#2\textwidth}#5\end{minipage}%
    \begin{minipage}[#1]{#3\textwidth}#6\end{minipage}%
    \begin{minipage}[#1]{#4\textwidth}#7\end{minipage}}
  \newcommand{\threecoltop}[7][0]{%
   \begin{minipage}[t]{#2\columnwidth}#5\end{minipage}\hspace*{#1\columnwidth}%
   \begin{minipage}[t]{#3\columnwidth}#6\end{minipage}\hspace*{#1\columnwidth}%
   \begin{minipage}[t]{#4\columnwidth}#7\end{minipage}}
  \newcommand{\fourcol}[9][0]{%
   \begin{minipage}[c]{#2\columnwidth}#6\end{minipage}\hspace*{#1\columnwidth}%
   \begin{minipage}[c]{#3\columnwidth}#7\end{minipage}\hspace*{#1\columnwidth}%
   \begin{minipage}[c]{#4\columnwidth}#8\end{minipage}\hspace*{#1\columnwidth}%
   \begin{minipage}[c]{#5\columnwidth}#9\end{minipage}}
  \newcommand{\helvetica}[4]{\setlength{\unitlength}{1pt}\fontsize{#1}{#1}\linespread{#2}\usefont{OT1}{phv}{#3}{#4}}
  \newcommand{\helve}[1]{\helvetica{#1}{1.5}{m}{n}}
  \newcommand{\german}{\usepackage[german]{babel}\usepackage[utf8]{inputenc}}
\newcommand{\sans}{\renewcommand{\familydefault}{\sfdefault}}

\newcommand{\norm}[1]{|\!|#1|\!|}
\newcommand{\expr}[1]{[\hspace{-.2ex}[#1]\hspace{-.2ex}]}

\newcommand{\Jwi}{J^\sharp_W}
\newcommand{\THi}{T^\sharp_H}
\newcommand{\Jci}{J^\natural_C}
\newcommand{\hJi}{{\bar J}^\sharp}
\renewcommand{\|}{\,|\,}
\renewcommand{\=}{\!=\!}
\newcommand{\myminus}{{\hspace*{-.0pt}\text{\rm -}\hspace*{-.5pt}}}
\newcommand{\myplus}{{\hspace*{-.0pt}\text{\rm +}\hspace*{-.5pt}}}
\newcommand{\1}{{\myminus1}}
\newcommand{\2}{{\myminus2}}
\newcommand{\3}{{\myminus3}}
\newcommand{\mT}{{\text{\rm -}\hspace*{-1pt}\top}}
\newcommand{\po}{{\myplus1}}
\newcommand{\pt}{{\myplus2}}
\newcommand{\T}{{\!\top\!}}
\newcommand{\xT}{{\underline x}}
\newcommand{\uT}{{\underline u}}
\newcommand{\zT}{{\underline z}}
\newcommand{\Sum}{\textstyle\sum}
\newcommand{\Int}{\textstyle\int}
\newcommand{\Prod}{\textstyle\prod}

\newenvironment{centy}{
\vspace{15mm}
\large
\hspace*{5mm}
\begin{minipage}{8cm}\it\color{blue}
}{
\end{minipage}
}

\newcommand{\old}{{\text{old}}}
\newcommand{\MAP}{{\text{MAP}}}
\newcommand{\ML}{{\text{ML}}}

\newcommand{\pub}[1]{{\color{green}\helvetica{8}{1.3}{m}{n}#1\\}}
\newcommand{\KL}[2]{\opKL\big(#1\,\big|\!\big|\,#2\big)} 

\renewcommand{\show}[2][.7]{\centerline{\includegraphics[width=#1\columnwidth]{#2}}}
\newcommand{\showh}[2][.8]{\includegraphics[width=#1\columnwidth]{#2}}
\newcommand{\shows}[2][.8]{\centerline{\includegraphics[scale=#1]{#2}}}
\newcommand{\showhs}[2][.8]{\includegraphics[scale=#1]{#2}}
\newcommand{\mov}[2]{\movie[externalviewer]{{\color{blue}\small #1}}{movies/#2}}
\newcommand{\movex}[2]{\movie[externalviewer]{#1}{#2}} 
\newcommand{\movh}[3][autostart,loop]{
\movie[#1]{\showh[#2]{#3.jpg}}{#3.mp4}%
\movie[externalviewer]{$\circ$}{#3.mp4}
}
\newcommand{\movc}[3][autostart,loop]{\centerline{\movh[#1]{#2}{#3}}}
\newcommand{\mymovie}[3][]{\centerline{\movie[autostart,loop,#1]{\includegraphics[#1]{#2}}{#3}}}

\newcommand{\cen}[1]{\centerline{#1}}

\newcommand{\citing}[1]{
{\color{citcol}\ttiny#1\par}
}

\newcommand{\cit}[3]{
\par\smallskip
{\color{citcol}\ttiny #1: \emph{#2}. #3 \par}
}

\newcommand{\citurl}[4]{
\par\smallskip
{\color{citcol}\ttiny #1: \protect{\href{#4}{\color{blue}{#2.}}} #3 \par}
}

\newcommand{\cito}[3]{
\par\smallskip
{\color{bluecol}\ttiny #1: \emph{#2}. #3 \par}
}

\newcommand{\redoMacrosInProof}{
  \renewcommand{\d}{\delta}
  \renewcommand{\=}{\!=\!}
}

\newcommand{\pdflatex}{
  \definecolor{bluecol}{rgb}{0,0,.5}
  \DeclareGraphicsExtensions{.pdf,.png,.jpg,.eps}
  \renewcommand{\r}{\varrho}
  \renewcommand{\l}{\lambda}
  \renewcommand{\s}{\sigma}
  \renewcommand{\b}{\beta}
  \renewcommand{\d}{\delta}
  \renewcommand{\k}{\kappa}
  \renewcommand{\t}{\theta}
  \renewcommand{\O}{\Omega}
  \renewcommand{\o}{\omega}
  \renewcommand{\SS}{{\cal S}}
  \renewcommand{\=}{\!=\!}
}

\newcommand{\mypause}{\pause}
\newcommand{\defi}[1]{\textbf{#1}}
\newcommand{\red}[1]{\emph{\color{red}#1}}
\newcommand{\pos}{{\textsf{pos}}}
\newcommand{\eff}{{\textsf{eff}}}
\newcommand{\rot}{{\textsf{rot}}}
\newcommand{\veC}{{\textsf{vec}}}
\newcommand{\quat}{{\textsf{quat}}}
\newcommand{\col}{{\textsf{col}}}
\newcommand{\de}[2]{\frac{\partial #1}{\partial #2}}
\newcommand{\target}{{\text{target}}}
\newcommand{\near}{{\text{near}}}
\newcommand{\qfree}{Q_{\text{free}}}
\renewcommand{\vec}{\boldsymbol}
\newcommand{\lft}{\text{left}}
\newcommand{\rgh}{\text{right}}
\newcommand{\prev}{{\text{prev}}}
\newcommand{\TR}[2]{T_{{#1}\to{#2}}}
\newcommand{\RO}[2]{R_{{#1}\to{#2}}}
\newcommand{\liter}{\helvetica{8}{1.1}{m}{n}\parskip 1ex}
\newcommand{\Fc}{\color{green}F}
\newcommand{\muc}{\color{blue}\mu}
\newcommand{\Astar}{A$^*$}

\newcommand{\eqannot}[1]{\tag*{\footnotesize #1}}

\newcommand{\adec}{\r_\a^-}
\newcommand{\ainc}{\r_\a^+}
\newcommand{\ldec}{\r_\l^-}
\newcommand{\linc}{\r_\l^+}
\newcommand{\minc}{\r_\m^+}
\newcommand{\mdec}{\r_\m^-}
\newcommand{\step}{{\delta}}
\newcommand{\lsstop}{\r_{\text{ls}}}
\newcommand{\stepmax}{\step_{\text{max}}}
\newcommand{\Min}{\text{min}}
\newcommand{\Max}{\text{max}}
\DeclareMathOperator*{\argmin}{arg\,min}
\DeclareMathOperator*{\argmax}{arg\,max}
\DeclareMathOperator*{\EEE}{\mathbb{E}}

\definecolor{amaranth}{rgb}{0.9, 0.17, 0.31}
\definecolor{BLUE}{rgb}{0,0,1}
\definecolor{algblue}{HTML}{125DA6} 
\definecolor{linkblue}{RGB}{34,121,181}
\newcommand{\rred}[1]{\textcolor{amaranth}{#1}}
\newcommand{\bblue}[1]{\textcolor{algblue}{#1}}
\newcommand{\oorange}[1]{\textcolor{orange}{#1}}

\newcommand{\wrt}{w.r.t.\@\xspace}
\newcommand{\iid}{i.i.d.\@\xspace}
\newcommand{\cma}{\Delta \vx_{\textsc{cma}}}

\usepackage{mdframed}
\usepackage{etoc}

\definecolor{darkred}{RGB}{100,0,0}
\definecolor{darkblue}{RGB}{0,0,100}
\definecolor{darkgreen}{RGB}{0,75,0}

\hypersetup{citecolor=darkgreen}
\hypersetup{linkcolor=darkgreen}
\hypersetup{urlcolor=darkgreen}

\declaretheoremstyle[
    spaceabove    = \parsep,
    spacebelow    = \parsep,
    bodyfont      = \normalfont\itshape,
]{theoremsty}

\declaretheoremstyle[
    spaceabove    = \parsep,
    spacebelow    = \parsep,
    bodyfont      = \normalfont,
]{normalsty}

\mdfdefinestyle{coloredstyle}{%
    leftline          = false,%
    rightline         = false,%
    topline           = false,%
    bottomline        = false,%
    backgroundcolor   = gray!10,%
    innertopmargin    = 1.5ex,%
    innerbottommargin = 0.7ex,%
    innerleftmargin   = 1.ex,%
    innerrightmargin  = 1.ex,%
    skipabove         = \topskip,%
    skipbelow         = \parskip%
}

\declaretheorem[name=Theorem, style=theoremsty, mdframed={style = coloredstyle}]{theorem}
\declaretheorem[name=Proposition, style=theoremsty, mdframed={style = coloredstyle}]{proposition}
\declaretheorem[name=Corollary, style=theoremsty, mdframed={style = coloredstyle}]{corollary}
\declaretheorem[name=Lemma, style=theoremsty, mdframed={style = coloredstyle}]{lemma}
\declaretheorem[name=Conjecture, style=theoremsty, mdframed={style = coloredstyle}]{conjecture}

\declaretheorem[name=Theorem, style=theoremsty, mdframed={style = coloredstyle}]{theorem*}
\declaretheorem[name=Proposition, style=theoremsty, mdframed={style = coloredstyle}]{proposition*}
\declaretheorem[name=Corollary, style=theoremsty, mdframed={style = coloredstyle}]{corollary*}
\declaretheorem[name=Lemma, style=theoremsty, mdframed={style = coloredstyle}]{lemma*}
\declaretheorem[name=Conjecture, style=theoremsty, mdframed={style = coloredstyle}]{conjecture*}

\declaretheorem[name=Theorem, style=theoremsty, mdframed={style = coloredstyle}, numberwithin=section]{theorem_ap}
\declaretheorem[name=Proposition, style=theoremsty, mdframed={style = coloredstyle}, numberwithin=section]{proposition_ap}
\declaretheorem[name=Corollary, style=theoremsty, mdframed={style = coloredstyle}, numberwithin=section]{corollary_ap}
\declaretheorem[name=Lemma, style=theoremsty, mdframed={style = coloredstyle}, numberwithin=section]{lemma_ap}
\declaretheorem[name=Conjecture, style=theoremsty, mdframed={style = coloredstyle}, numberwithin=section]{conjecture_ap}

\declaretheorem[name=Remark, style=normalsty]{remark}
\declaretheorem[name=Definition, style=normalsty, mdframed={style = coloredstyle}]{definition}
\declaretheorem[name=Assumption, style=normalsty, mdframed={style = coloredstyle}]{assumption}
\declaretheorem[name=Example, style=normalsty, mdframed={style = coloredstyle}]{example}

\declaretheorem[name=Remark, style=normalsty, numbered=no]{remark*}
\declaretheorem[name=Definition, style=normalsty, mdframed={style = coloredstyle}, numbered=no]{definition*}
\declaretheorem[name=Assumption, style=normalsty, mdframed={style = coloredstyle}, numbered=no]{assumption*}

\crefname{assumption}{assumption}{assumptions}

\etocframedstyle[1]{\textbf{\textsc{Table of Contents}}}
\etocsettocdepth{3}

\hypersetup{
  colorlinks,
  linkcolor=black,
  citecolor=black,
  urlcolor={blue!80!black}
}
\newcommand{\tabpm}[2]{\ensuremath{#1\,\scriptstyle\pm #2}}
\title{Manifold Sampling via Entropy Maximization}

\author{%
Cornelius V.~Braun$^1$\thanks{Equal contribution; Authors in alphabetical order.}
\qquad
Tilman Burghoff$\,^1$\footnotemark[1]
\qquad
Marc Toussaint$^{1,2}$\\
  $^1$Technische Universität Berlin\\
  $^2$Robotics Institute Germany\\
\texttt{\{braun,t.burghoff,toussaint\}@tu-berlin.de} 
}

\begin{document}

\maketitle
\begin{abstract}
    Sampling from constrained distributions has a wide range of applications, including in Bayesian optimization and robotics.
    Prior work establishes convergence and feasibility guarantees for constrained sampling, but assumes that the feasible set is connected.
    However, in practice, the feasible set often decomposes into multiple disconnected components, which makes efficient sampling under constraints challenging.
    In this paper, we propose MAnifold Sampling via Entropy Maximization (MASEM) for sampling on a manifold with an unknown number of disconnected components, implicitly defined by smooth equality and inequality constraints.
    The presented method uses a resampling scheme to maximize the entropy of the empirical distribution based on k-nearest neighbor density estimation.
    We show that, in the mean field,  MASEM decreases the KL-divergence between the empirical distribution and the maximum-entropy target exponentially in the number of resampling steps.
    We instantiate MASEM with multiple local samplers and demonstrate its versatility and efficiency on synthetic and robotics-based benchmarks.
    MASEM enables fast and scalable mixing across a range of constrained sampling problems, improving over alternatives by an order of magnitude in Sinkhorn distance with competitive runtime.%
\end{abstract}

\section{Introduction}\label{sec:intro}
Sampling from constrained distributions is a fundamental problem in machine learning, with applications including Bayesian inference~\citep{zhang2022sampling} and molecular design~\citep{rapaport2004geometrically} as well as robotics and trajectory optimization~\citep{kingston2018samplingbased}. 
In particular, many tasks in robotics require sampling kinematically feasible states or trajectories, for instance, to generate data for behavior cloning or to sample reset states for reinforcement learning~\citep{power2023sampling, florensa2017reverse, toussaint2026constrained, cobobriesewitz2026stabilityguided}.
The feasible set is usually given implicitly by constraint functions, which can only be evaluated point-wise. 
This makes sampling from it a challenging problem, as we have no prior knowledge about many of its properties, like the number of connected components.

When no constraints are present, the dominant sampling paradigm is built around \ac{mcmc} methods such as Metropolis–Hastings, Langevin dynamics, and Hamiltonian Monte Carlo~\citep{andrieu2003introduction}.
To handle constraints, these approaches are typically combined with projections or landing mechanisms that ensure samples remain feasible~\citep{zhang2022sampling, jeon2025fast, zappa2018monte, lelievre2019hybrid, kaufman1998direction}.
However, all such methods rely on local \ac{mcmc} kernels, which limits them to moves within a single connected region of the feasible set, and as a result they struggle with infeasibility barriers. 
As a result, even if such kernels mix well locally, they cannot determine how much probability mass to assign to each component (as illustrated in \Cref{fig:toy_spheres}). 
This makes them ill-suited in many practical scenarios, where such disconnected feasible sets arise naturally~\citep{le2023accelerating, li2024learning, maus2023discovering}.

We tackle this problem by maximizing sample entropy across the feasible set.
This provides empirical coverage of the feasible set, as the samples are distributed approximately uniformly.
It enables estimation of component volumes and may serve as unbiased initialization for sampling from arbitrary distributions across components.
The key idea is to combine local $k$-nearest-neighbor density estimates \citep{loftsgaarden1965nonparametric} with Sequential Monte Carlo (SMC)-inspired resampling: particles in underrepresented components receive higher importance weights, encouraging redistribution of mass toward a uniform allocation. 
Crucially, the resampling step is modular and can be composed with generic constrained MCMC samplers.
Adding this method to a constrained sampler is minimally invasive in the sense that it only incurs a small runtime overhead and does not degrade performance on compact and connected manifolds.
%
Our main contributions are as follows:
\begin{itemize}[parsep=1pt, before=\setlength{\parskip}{0pt},leftmargin=2em,itemsep=0pt]
    \item We introduce \textbf{Ma}nifold \textbf{S}ampling via \textbf{E}ntropy \textbf{M}aximization (MASEM),
    an algorithm for uniform sampling on manifolds with disconnected components.
    \item We show that, under mild assumptions, the induced resampling operator contracts the KL-divergence to the uniform distribution at a geometric rate of $(1 - \tau / p)$ per iteration, where $\tau$ is a temperature parameter and $p$ is the intrinsic dimension of the manifold. 
    \item We validate the approach empirically on synthetic and robotics benchmarks and compare performance across different local samplers.
\end{itemize}


\begin{figure}[t]
    \centering
    \includegraphics[width=.95\linewidth, trim={0.25cm 0.2cm 0.2cm .2cm},clip]{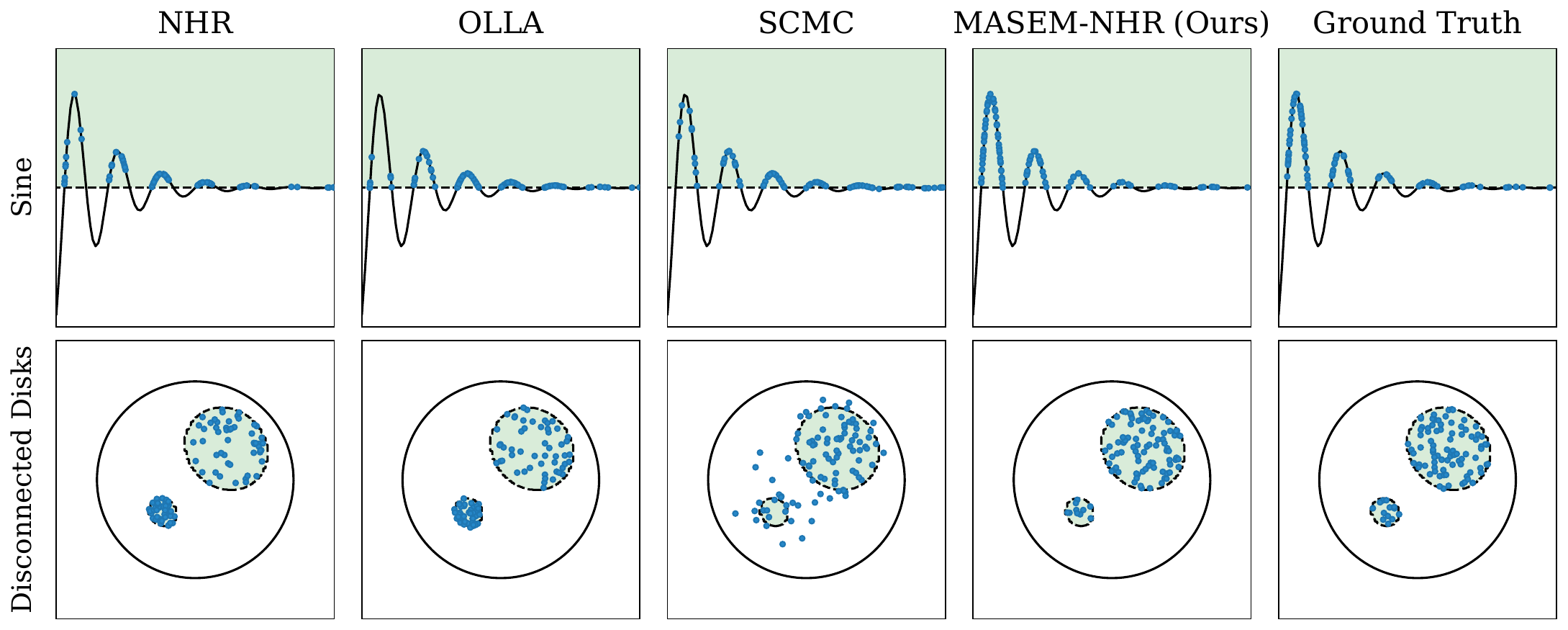}
    \caption{
    Scatter plots of 100 samples from different samplers on synthetic benchmarks. Black lines show equality constraints, and green shaded areas mark the feasible region(s) by inequality constraints. The bottom row depicts 3d samples on spheres projected into 2d. 
    NHR and OLLA fail to allocate correct sample masses: the larger disk and the leftmost peak of the sine are undersampled.
    }
    \label{fig:toy_spheres}
\end{figure}
\section{Related Work}\label{sec:rel}
Sampling under constraints is typically approached by adapting classical MCMC methods to ensure feasibility, for example by projection, reflection, or annealing techniques.
Soft-constraint and annealing approaches such as sequentially constrained Monte Carlo (SCMC)~\citep{golchi2016sequentially} improve global exploration by relaxing the constraints, but may produce infeasible samples (see \Cref{fig:toy_spheres}) and can still collapse to local constraint minima as the penalty is tightened.
In contrast, our method enables mixing across infeasibility barriers and exact constraint feasibility at convergence.

For manifold-constrained targets, constrained MCMC~\citep{zappa2018monte, lelievre2012langevin, brubaker2012family, byrne2013geodesic, bubeck2018sampling}, barrier-based~\citep{kook2022sampling} and projected Hamiltonian methods~\citep{lelievre2019hybrid}, landing-based approaches~\citep{jeon2025fast, zhang2022sampling}, constrained SVGD~\citep{zhang2022sampling, power2024constrained} and non-linear Hit-and-Run (NHR)~\citep{toussaint2024nlp} operate directly on or near the feasible set and can maintain exact or approximate feasibility.
Despite their differences, these methods share the common structural limitation that they rely on local transition kernels and therefore cannot reliably allocate probability mass across disconnected components.
Our approach is complementary to this line of work: rather than introducing a new local kernel, it adds a resampling mechanism that can be combined with existing constrained samplers.



\section{Preliminaries \& Notation}\label{sec:prelim}
We consider sampling uniformly from a feasible set
\begin{equation}
    \Sigma \coloneq \{x \in \mathbb{R}^d \mid h(x)=0,\ g(x)\le 0\}\subseteq\RRR^d,
\end{equation}
where $h\colon\mathbb{R}^d\to\mathbb{R}^m$ and $g\colon\mathbb{R}^d\to\mathbb{R}^\ell$ are differentiable equality and inequality constraint functions, which can only be evaluated point-wise and have Lipschitz-continuous derivatives.
We assume that the feasible set $\Sigma$ is bounded, for example by introducing the additional constraint $\lVert x\rVert\leq R$ for some large $R> 0$.
Under this assumption, the feasible set and its connected components are compact.
Furthermore, we impose the widely used \acf{licq}.
\begin{assumption}[\acl{licq} \citep{aragon2019nonlinear}]\label{ass:licq}
    For each point $x\in\Sigma$, the set of gradients of active constraints
    \begin{equation*}
    \{\nabla h_j(x)\} \cup \{\nabla g_i(x)\mid g_i(x)=0\}
    \end{equation*}
    is linearly independent.
\end{assumption}
This assumption ensures that the feasible set $\Sigma$ is a Riemannian manifold with corners and intrinsic dimension $p=d-m$~\citep[Lemma 3.1.12]{jongen2000nonlinear}.
Let $\mu_\Sigma$ denote the induced Hausdorff measure and $S=\mu_\Sigma(\Sigma)$ its volume.

We consider the general setting that the manifold decomposes into a finite number of disconnected components $\Sigma = \Sigma_1 \sqcup \cdots \sqcup \Sigma_C$.
Importantly, however, we do \emph{not} assume knowledge of the number $C$ of components or any of their structural properties, as they are implicitly defined via $h$ and $g$.
Due to \licq, each component has measure $\sigma_c = \mu_\Sigma(\Sigma_c)>0$.
Note that if $p=d$, this problem reduces to sampling from the unnormalized density $\hat \rho(x)=1_\Sigma(x)$, where $1_\Sigma(x)$ is the indicator function of $\Sigma$. 
Our target is to maximize the entropy of samples across the constrained set.
\begin{lemma}[Informally]\label{lemma:ent_kl}
    The uniform density is the unique maximizer of the entropy on $\Sigma$.
\end{lemma}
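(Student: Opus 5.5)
The plan is to make the statement precise by working with probability densities $\rho$ on $\Sigma$ taken with respect to the Hausdorff measure $\mu_\Sigma$. The differential entropy is $\Hent(\rho) = -\int_\Sigma \rho\log\rho\,d\mu_\Sigma$, and the candidate maximizer is the uniform density $u \equiv 1/S$ on $\Sigma$. This candidate is well defined because $\Sigma$ is compact, so $S<\infty$, and because $S>0$: under \licq each component has $\sigma_c>0$. The key identity I would establish is
\begin{equation*}
\KL{\rho}{u} = \int_\Sigma \rho\log\frac{\rho}{1/S}\,d\mu_\Sigma = \log S - \Hent(\rho),
\end{equation*}
valid for every $\rho$ with finite entropy. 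This identity also explains the title of the lemma, since maximizing entropy is then the same as minimizing the KL-divergence to the uniform target.

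The next step is Gibbs' inequality: $\KL{\rho}{u}\ge 0$, with equality if and only if $\rho = u$ holds $\mu_\Sigma$-almost everywhere. I would prove this with Jensen's inequality for the strictly concave function $\log$:
\begin{equation*}
-\KL{\rho}{u} = \int_{\{\rho>0\}} \rho\log\frac{u}{\rho}\,d\mu_\Sigma \le \log\int_{\{\rho>0\}} u\,d\mu_\Sigma \le \log 1 = 0.
\end{equation*}
Suppose equality holds throughout. Strict concavity then forces $u/\rho$ to be constant on $\{\rho>0\}$, and equality in the last step forces $\{\rho>0\}$ to have full $u$-measure. Together these give $\rho=u$ almost everywhere. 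It follows that $\Hent(\rho)\le \log S = \Hent(u)$ for all admissible $\rho$, with equality only at $u$, which proves the claim.

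The main obstacle is not the inequality itself but the measure-theoretic setup. First, densities must be taken relative to $\mu_\Sigma$ rather than Lebesgue measure on $\RRR^d$, because when $p<d$ the set $\Sigma$ is Lebesgue-null. Second, uniqueness can only hold up to $\mu_\Sigma$-null sets, and I would state it in that sense. Third, densities with $\Hent(\rho)=-\infty$, and those where the integral is not defined, must be handled. I would restrict to $\rho$ with $\rho\log\rho\in L^1(\mu_\Sigma)$, noting that any other density has entropy $-\infty$ and therefore cannot be a maximizer. Finally, I would remark that the disconnectedness of $\Sigma$ plays no role in this argument. In particular, the maximizer assigns mass $\sigma_c/S$ to each component $\Sigma_c$.
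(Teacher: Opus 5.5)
Your proposal is correct and follows the paper's proof: both rest on the identity $D_{\mathrm{KL}}(\rho\,\|\,u_\Sigma)=\log S-\Hent(\rho)$, and then apply Gibbs' inequality together with its equality case. The only differences are that you derive Gibbs' inequality from Jensen's inequality rather than citing it, and that you spell out the measure-theoretic caveats (densities taken with respect to $\mu_\Sigma$, uniqueness only up to null sets, and restriction to densities of finite entropy), which the paper leaves implicit.
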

We formalize this in \Cref{app:proofs}.
Our method maximizes the entropy by estimating the local density by $k$-nearest neighbor distances~\citep{loftsgaarden1965nonparametric}. 
Let samples $x_1,\dots,x_N\in\Sigma$ be samples drawn from an underlying distribution parameterized by a density $\rho$. 
We define $\varepsilon_{i,k}$ as the distance from $x_i$ to its $k$-th nearest neighbor among $\{x_j\}_{j\ne i}$.
The $k$-nearest-neighbor density estimator at $x_i$ is
\begin{equation}\label{eq:knn_density}
    \hat{\rho}(x_i) = \frac{k}{NV_p\varepsilon_{i,k}^p},
\end{equation}
where $V_p$ is the volume of the unit ball in $\mathbb{R}^p$.
Consistency of this estimator, meaning $\hat{\rho}(x_i) \to \rho(x_i)$ in probability as $N\to\infty$ with $k\to\infty$, $k/N\to 0$, is a classical result for interior points of the support~\citep{biau2015lectures}.



\section{Manifold Sampling via Entropy Maximization}\label{sec:alg}
We now introduce \emph{Manifold Sampling via Entropy Maximization} (MASEM), a framework for uniform sampling on a constrained set $\Sigma$ as defined in \Cref{sec:prelim}.
Our goal is to construct a particle system whose empirical distribution approximates the uniform density $u_\Sigma$ on $\Sigma$.
Given an existing strong local sampler that preserves feasibility, the main challenge for constrained uniform sampling is \textit{global} mass allocation.
As $\Sigma$ is only defined implicitly via constraints, this is a hard problem, since the measures of components or even their number are not known a priori, so local samplers cannot calibrate probability mass across disconnected components efficiently.
Thus, MASEM introduces a mechanism that redistributes particles across components to ensure that each component receives mass proportional to its measure.


The design of MASEM is intentionally lightweight. 
Its key mechanism is an importance-resampling step~\citep{andrieu2003introduction} that moves particles towards the maximum entropy target. This target is the uniform density $p=u_\Sigma$ (\Cref{lemma:ent_kl}). 
Let $q$ denote the current particle density on $\Sigma$. 
Since the target density is uniform, importance-resampling with weights $w_i\propto{p(x_i)}/{q(x_i)}$ reduces to resampling according to $w_i\propto q(x)^{-1}$. 
While the current density itself is unknown, we can estimate it using the $k$-nearest neighbor estimator \eqref{eq:knn_density}.
This leads to $q(x_i)\propto \varepsilon_{i,k}^{-p}$ in probability in the large particle limit, where $\varepsilon_{i,k}$ denotes the distance of sample $x_i$ to its $k$th neighbor.
Replacing the potentially unknown dimension $p$ by a temperature parameter $\tau$ yields
\begin{equation}\label{eq:ent_weights}
    \bar w_i = \varepsilon_{i,k}^{\,\tau}, \qquad 
    w_i = \frac{\bar w_i}{\sum_j \bar w_j}\,.
\end{equation}
Thus, particles in low-density regions have larger $k$-NN radii and receive larger resampling weights, encouraging redistribution toward underrepresented connected components.
The temperature $\tau$ controls the aggressiveness of the resampling step. 
When $\tau$ is small, the update is conservative and moves mass more slowly.
Larger values of $\tau$ accelerate mass redistribution but can amplify errors in the $k$-NN density estimate when the number of particles or mixing steps is insufficient.

\begin{algorithm}[t]
    \caption{Entropy-based uniform sampling on disconnected manifolds}
    \label{alg:ours}
    \textbf{Input:}
    Initialization scale $\sigma^2$, number of samples $N$, 
    number of iterations $T$, neighborhood parameter $k$, temperature $\tau$, rejuvenation kernel $K$, rejuvenation step number $M$, constraint functions $h, g$. 
    
    \begin{algorithmic}[1]
        \State \textit{// Initialization}
        \State Sample $N$ particles $x^{i}_0\sim \mathcal{N}(0,\sigma^2)$
        \State Project the particles onto $\Sigma$ (e.g., with Gauss-Newton method)
        \State Apply a manifold-constrained rejuvenation kernel $K$
        \State \textit{// Iterative entropy maximization}
        \For{$t=1,\dots,T$}
            \State Compute weights $w_t^{i}$ from the entropy-based rule in \eqref{eq:ent_weights}
            \State Resample particles according to the normalized weights $w_t^{i}$
            \State Apply a manifold-constrained rejuvenation kernel $K$ for $M$ steps
        \EndFor
        \State \Return final particle set $X^{(T)}$
    \end{algorithmic}
\end{algorithm}

MASEM, described in pseudocode in \Cref{alg:ours}, works by first initializing feasible particles and then iterating resampling followed by manifold sampling rejuvenation. 
We initialize the particles by random \iid sampling followed by projection  onto the feasible set using Gauss-Newton steps on the squared slack
\begin{equation}\label{eq:sqr-slack}
    \operatorname{Slack}(x) = \frac{1}{2}\bigl(\lVert g(x)_+\rVert^2 + \lVert h(x)\rVert^2\bigr)
\end{equation} to ensure approximate constraint feasibility.
After initialization, we apply a feasibility-preserving kernel $K$ for rejuvenation.
This ensures that the particles are approximately uniformly distributed within components.
Then, at each iteration, particles are resampled using the entropy-based resampling weights \eqref{eq:ent_weights}.
After resampling, we apply $K$ again to obtain mixed samples within each component.
As local kernels are only approximately feasible in practice, we use a slack-penalized variant of the resampling weights to avoid over-replicating particles with large constraint violations in our experiments.
These practical modifications are described in \Cref{app:impl-details}.
%

\section{Theoretical Analysis}\label{sec:analysis}
Our theoretical guarantees address the global mass-redistribution problem, which cannot be solved by existing local samplers. 
We analyze \Cref{alg:ours} in the mean-field regime, where the $N \to \infty$ limit reduces the particle system to a deterministic evolution on the simplex of component weights. 
This parallels the idealized-limit viewpoint taken in related work on particle-based constrained sampling~\citep{zhang2022sampling, jeon2025fast}.
Our main result (\cref{thm:main}) shows that a single resampling step contracts the KL divergence to the uniform distribution on $\Sigma$ contracts at rate $(1-\tau/p)$, leading to exponential convergence.
All proofs for this section can be found in \Cref{app:proofs}. 

\subsection{Preliminaries}
We need two prerequisites for our main result: Each component must be initially populated, and the sampler has to sample accurately within components. 
We begin by showing that the initialization step of \Cref{alg:ours} populates all components for $N$ large enough.
%
\begin{lemmaE}[Component coverage at initialization][]\label{lem:inital-populated}
Suppose $h, g$ have Lipschitz-continuous derivatives and satisfy \licq, and that the feasible set $\Sigma$ is bounded and decomposes into a finite number of connected components. We draw $N$ points $x_i\sim\mathcal{N}(0,\sigma^2)$ for arbitrary $\sigma>0$
and apply any descent method on the squared slack \eqref{eq:sqr-slack} satisfying sufficient decrease. Then $\mathbb{P}(\text{every component is hit}) \to\nolinebreak1$ as $N \to \infty$.
\end{lemmaE}
\begin{proof}[Proof sketch]
Take $x^* \in \Sigma_c$ arbitrary.
The local error bound $d(x, \Sigma) \leq \kappa \, \text{Slack}(x)^{1/2}$ holds for some $\kappa > 0$ due to \licq~\citep{andreani2025primaldual}. 
Since $\Sigma$ is compact and has finitely many components, the inter-component distance $\delta \coloneq \min_{i \neq j} d(\Sigma_i, \Sigma_j)$ is strictly positive~\citep[Ch.~XI 4.4]{dugundji1966topology}.
Taking $L\coloneqq\{\text{Slack}(x)\leq (\delta / (3 \kappa))^2\}$ ensures that all $x\in L\cap U_{\delta / 3}(\Sigma_c)$ converge to $\Sigma_c$ under any descent method satisfying sufficient decrease~\citep{karimi2016linear}.
Therefore, the basin of attraction has positive Lebesgue measure, and standard coverage arguments yield the claim.
\end{proof}
\begin{proofE}
    Consider gradient descent on the squared constraint violations
    \begin{equation*}
        \text{Slack}(x) = \frac12 \left(\lVert g(x)_+\rVert^2 + \lVert h\rVert^2\right)\,.
    \end{equation*} 
    We will show that each component $\Sigma_c$ admits a basin of attraction $A_c$ under gradient descent on $\operatorname{Slack}(x)$ with $\mu(A_c)>0$.
    Then, drawing \iid from an uncorrelated Gaussian results in each set of positive measure having a positive probability of being hit. Let the probability of hitting $A_c$ be $p_c>0$.
    The probability that no particle among $N$ \iid draws lands in $A_c$ is $(1 - p_c)^N \to 0$, where $M_0$ is the volume of the sampling region. 
    A union bound over the finitely many components gives the claim.


    It is left to show that each component admits a basin of attraction of positive measure. 
    To that end, we show that for each component there exists a nonempty open set $U_c$, such that gradient descent on points in this set ends up in $\Sigma_c$. Then $U_c\subseteq A_c$ and therefore $\mu(A_c)>0$.

    Fix $\Sigma_c$ and $x^* \in \Sigma_c$. 
    By LICQ, the squared slack $\operatorname{Slack}(x)$ fulfills the error bound \citep{andreani2025primaldual}. That means there exists $\varepsilon, \kappa > 0$ such that
    \begin{equation}\label{eq:err_bound}
        d(x, \Sigma) \leq \kappa \, \operatorname{Slack}(x)^{1/2}
        \quad\text{for all } x \in U_\varepsilon(x^*)\,.
    \end{equation}
    Since the components are compact, there exists \begin{equation*}
        \delta \coloneq \min_{i \neq j} d(\Sigma_i, \Sigma_j) > 0
    \end{equation*}
    by standard topological arguments \citep[Ch.~XI 4.4]{dugundji1966topology}.
    We choose $\alpha > 0$ such that $\sqrt{\alpha} \leq \delta/(3\kappa)$, and define $L \coloneq \{x \in U_\varepsilon(x^*) \mid \text{Slack}(x) < \alpha\}$.
    For any $x \in L$,
    \begin{equation*}
        d(x, \Sigma) \leq \kappa \operatorname{Slack}(x)^{1/2} < \kappa\sqrt{\alpha}
        \leq \delta/3.
    \end{equation*}

    By \citet{karimi2016linear}, gradient descent from any $x_0 \in L$ converges to $\Sigma$, and, since \mbox{$\text{Slack}(x_{k+1})\nolinebreak\leq \text{Slack}(x_k)$}, iterates remain in $L$.
    Therefore any $x_0 \in U_c \coloneq L \cap U_{\delta/3}(\Sigma_c)$ converges into $\Sigma_c$.
    The set $U_c$ is open and nonempty, hence $\mu(A_c) \geq \mu(U_c) > 0$, which completes the proof.
\end{proofE}

%

To consider component-level convergence properties, we assume that the local kernel perfectly mixes chains within each component. This assumption decouples the analysis of the resampling in \Cref{alg:ours} from any specific local sampler.
\begin{assumption}[Mixing Rejuvenation]\label{ass:mix-rejuv}
    The rejuvenation kernel $K$ is $u_c$-invariant on each $\Sigma_c$ (i.e., $u_c K = u_c$, where $u_c = \sigma_c^{-1}\mathbf{1}_{\Sigma_c}$), and mixes sufficiently, such that after application the particles within $\Sigma_c$ are approximately i.i.d.\ draws from~$u_c$.
\end{assumption}
This assumption ensures that chains stay within their component during mixing. In practice, if components are close together, the chain might be able to jump between components. For the sake of our analysis, we treat these clusters of components as one \enquote{meta-component}, since, given enough time, the chains will mix uniformly within this cluster.

This assumption allows us to consider component-level weights instead of individual particles. 
Let $\alpha_c \coloneq N_c/N$ denote the fraction of particles in $\Sigma_c$. 
Then the empirical density after rejuvenation takes the mixture form\tb{maybe i missed something, but should the empirical density just be $p_\alpha(x) = \sum_{c=1}^{C} \alpha_c 1_{\Sigma_c}(x)$?}  
$p_\alpha(x) = \alpha_{c(x)}$, 
where $c(x)$ denotes the component containing $x$.
Let $\alpha = (\alpha_1, \dots , \alpha_C )$ be the vector of component weights that induces the distribution $p_\alpha$. 
The sampling problem thus reduces to steering $\alpha \in \Delta^{C-1}$ towards the maximum entropy distribution $\alpha^* \coloneq (\sigma_1/S, \dots, \sigma_C/S)$.
    

\subsection{Convergence Guarantees}
\label{sec:theory-convergence}
We first derive the deterministic map that governs the evolution of $\alpha$ under resampling, then show that its iterates contract in KL divergence.

\begin{propositionE}[Mean-field resampling map][]
\label{prop:phi-op}
Under Assumption~\ref{ass:mix-rejuv}, the $k$-NN radius of a particle $x_i \in \Sigma_c$ concentrates at $\varepsilon_{i,k} \asymp (k\sigma_c / (N V_p \alpha_c))^{1/p}$ in the limit $N \to \infty$ with $k_N \to \infty$, $k_N/N \to\nolinebreak 0$~\citep[Ch.~2]{biau2015lectures}. 
Consequently, the resampling weights $w_i \propto \varepsilon_{i,k}^\tau$ induce the map
\begin{equation}
  \label{eq:phi-op}
  (\Phi\alpha)_c \coloneq \frac{\alpha_c^{1-\beta}\,(\alpha_c^*)^\beta}{\sum_{j=1}^C \alpha_j^{1-\beta}\,(\alpha_j^*)^\beta},
  \qquad \beta \coloneq \tau / p.
\end{equation}
\end{propositionE}
\begin{proofE}
The mass assigned to component $c$ after normalization is
\begin{equation*}
  (\Phi\alpha)_c
  = \frac{\sum_{x_i \in \Sigma_c} w_i}{\sum_j \sum_{x_k \in \Sigma_j} w_k}
  = \frac{N_c w_c}{\sum_j N_j w_j}
  = \frac{\alpha_c\,(\sigma_c/\alpha_c)^\beta}
         {\sum_j \alpha_j\,(\sigma_j/\alpha_j)^\beta}
  = \frac{\alpha_c^{1-\beta}\sigma_c^\beta}
         {\sum_j \alpha_j^{1-\beta}\sigma_j^\beta},
\end{equation*}
where we used $N_c = \alpha_c N$ and the concentration $w_c \propto (\sigma_c/\alpha_c)^\beta$. 
Substituting $\sigma_c = S\alpha_c^*$ and cancelling $S^\beta$ gives~\eqref{eq:phi-op}.
\end{proofE}

The map~\eqref{eq:phi-op} is a geometric mean of the current distribution $\alpha$ and the target $\alpha^*$.
Its \emph{unique} fixed point on $\Delta^{C-1}$ is $\alpha^*$. 
Thus, iterating $\Phi$ contracts $\alpha$ towards $\alpha^*$, with the parameter $\beta = \tau/p$ controlling the aggressiveness of rebalancing. 
We show that iterating $\Phi$ drives $\alpha$ to $\alpha^*$ at a geometric rate.

\begin{theoremE}\label{thm:main}
    Consider twice differentiable constraints $h, g$ fulfilling \licq, which define a bounded constrained set $\Sigma$ that decomposes into a finite number of connected components. 
    Suppose all components have positive measure with respect to the induced Hausdorff measure on $\Sigma$.
    %
    Under \Cref{ass:mix-rejuv} for any $\tau \in (0, p)$ and any schedule $k_N \to \infty$, $k_N/N \to 0$, the mean-field iterates $\alpha_{t+1} = \Phi(\alpha_t)$ of \Cref{alg:ours} satisfy the following bound

    \begin{equation}\label{eq:final-kl-bound}
        D_{\mathrm{KL}}^\Sigma(p_{\alpha_t} \| p_{\alpha^*}) \leq C_0 \left(1 - \frac{\tau}{p}\right)^{2t}
    \end{equation}
    with 
    \begin{equation}\label{eq:C0}
        C_0 = \frac14\left(\max_c \log\frac{\alpha_c^{(0)}}{\alpha_c^*} - \min_c \log\frac{\alpha_c^{(0)}}{\alpha_c^*}\right)^2
    \end{equation}
    for the initial component weights $\alpha_c^{(0)}$ after the projection step of the algorithm.
    In particular, $p_{\alpha_t} \to u_\Sigma$ in KL as $t \to \infty$.
\end{theoremE}
\begin{proof}[Proof sketch]
    We define the relative estimation error $y_{t,c}\coloneqq \tfrac{\alpha_{t,c}}{\alpha^*_c}$ as well as and $z_c \coloneq \log y_{0,c}$ and show using induction that 
    $y_{t,c} = {y_{0,c}^{(1-\beta)^t}} / {\tsum_j \alpha_j^* y_{0,j}^{(1-\beta)^t}}$.
    Writing $a_t \coloneq (1-\beta)^t$, this admits the exponential closed form
    \begin{equation}\label{eq:tilted-family}
      \alpha_{t,c} \coloneq \frac{\alpha_c^* \, e^{a_t z_c}}{Z(a_t)},
      \qquad Z(a) \coloneq \sum_j \alpha_j^* e^{a z_j} = \mathbb{E}_{\alpha^*}[e^{az}],
    \end{equation}
    from which we can see $\alpha_t$ is an exponential tilt of $\alpha^*$, with inverse temperature $a_t$, which shrinks geometrically to zero.
    
    Substituting~\eqref{eq:tilted-family} into the definition of KL gives the decomposition
    \begin{equation*}
        D_{\mathrm{KL}}^\Sigma(p_{\alpha_t} \| p_{\alpha^*}) = a_t \sum_c\alpha_{t,c}\log y_{0,c} - \log Z(a_t),
    \end{equation*}
    so it suffices to upper-bound $\bar z_t$ and lower-bound $\log Z(a_t)$.
    Both bounds follow from properties of the cumulant generating function (CGF) $\Lambda(a) := \log Z(a)$ of $z$ under $\alpha^*$.
    For the lower bound, Jensen's inequality applied to the convex function $e^{a z}$ gives $\log Z(a_t) \geq a_t\sum_c\alpha^*z_c$.
    For the upper bound, we use basic properties of the CGF and Popoviciu's variance inequality~\citep{popoviciu1935equations} to obtain a bound of $\Lambda''(a) = \mathbb{V}_{\nu_a}[z] \leq R^2/4$, where $R \coloneq \max_c \log(\alpha_c^{(0)}/\alpha_c^*)
       - \min_c \log(\alpha_c^{(0)}/\alpha_c^*)$.
    Integrating then gives $\bar z_t \leq \bar z_{\alpha^*} + a_t R^2/4$.
    As the linear terms cancel, we end with 
    $D_{\mathrm{KL}}^\Sigma(p_{\alpha_t} \| p_{\alpha^*}) \leq (R^2/4)(1-\beta)^{2t}$, which is the bound in \eqref{eq:final-kl-bound}.
\end{proof}
\begin{proofE}
    We prove the claim in three steps: (i) derive a closed form for the iterates, (ii) decompose the KL, and (iii) bound each term.

    \textbf{Step 1: Closed form of the iterates.}
    We recall that under Assumption~\ref{ass:mix-rejuv}, the $k$-NN radius of a particle $x_i \in \Sigma_c$ concentrates at $\varepsilon_{i,k} \asymp (k\sigma_c / (N V_p \alpha_c))^{1/p}$ in the limit $N \to \infty$ with $k_N \to \infty$, $k_N/N \to\nolinebreak 0$~\citep[Ch.~2]{biau2015lectures}. 
    Hence, our weights concentrate as \begin{equation*}
        w_i^t = \varepsilon_{k}(x)^{\tau / p} \asymp \left(\frac{k}{NV_p}\frac{\sigma_c}{\alpha_c}\right)^{\tau/p} \propto \left( \frac{\sigma_c}{\alpha_c} \right)^{\tau / p}\,.
    \end{equation*}

    Define $y_{t,c} \coloneq \alpha_{t,c}/\alpha_c^*$. 
    Substituting $\alpha_{t,c} = y_{t,c}\alpha_c^*$ into~\eqref{eq:phi-op} then yields:
    \begin{align*}
      (\Phi\alpha_t)_c
      = \frac{(y_{t,c}\alpha_c^*)^{1-\beta}(\alpha_c^*)^\beta}
             {\sum_j (y_{t,j}\alpha_j^*)^{1-\beta}(\alpha_j^*)^\beta}
      = \frac{\alpha_c^* y_{t,c}^{1-\beta}}
             {\sum_j \alpha_j^* y_{t,j}^{1-\beta}}.
    \end{align*}
    Using this expression, we can characterize how the gap $y_{t,c}$ changes over time: 
    \begin{equation}\label{eq:recc}
      y_{t+1,c} = \frac{\alpha_{t+1, c}}{\alpha^*_c}= \frac{\cancel{\alpha_c^*} y_{t, c}^{1-\beta}}{\cancel{\alpha_c^*}\sum_j \alpha_j^* y_{t, j}^{1-\beta}} = y_{t,c}^{1-\beta}/Z_t
    \end{equation}
    with $Z_t \coloneq \sum_j \alpha_j^* y_{t,j}^{1-\beta}$ independent of component $c$.
    This implies the following recurrence which we prove by induction:
    \begin{equation}
      \label{eq:induction-closed-form}
      y_{t,c} = \frac{y_{0,c}^{a_t}}{\tilde Z_t},
      \qquad a_t := (1-\beta)^t,
      \qquad \tilde Z_t := \sum_j \alpha_j^* y_{0,j}^{a_t}.
    \end{equation}
    \textit{Base case $t=0$}. We have $a_0=1$ so $\tilde Z_0 = \sum_j \alpha_j^{(0)} = 1$.
    This reduces the statement to $y_{0, c} = y_{0, c}$ which is trivially true.
    
    \textit{Inductive step}. We have
    \begin{align*}
        y_{t+1, c} &= \frac{y_{t, c}^{1-\beta}}{\sum_j \alpha_j^* {y_{t, j}}^{1-\beta}}  \eqannot{By \eqref{eq:recc}}\\
        &= \frac{\left(\frac{y_{0,c}^{a_t}}{Z_t}\right)^{1-\beta}}{\sum_j \alpha_j^* \left(\frac{y_{0,j}^{a_t}}{Z_t}\right)^{1-\beta}}   \eqannot{By \eqref{eq:induction-closed-form}}\\
        &= \frac{\frac{y_{0,c}^{a_t(1-\beta)}}{\cancel{Z_t^{1-\beta}}}}{\frac{1}{\cancel{Z_t^{1-\beta}}}\sum_j \alpha_j^* y_{0,j}^{a_t(1-\beta)}} \nonumber\\
        &= \frac{y_{0,c}^{a_{t+1}}}{\sum_j \alpha_j^* y_{0,j}^{a_{t+1}}},  \eqannot{By $a_t(1-\beta) = (1-\beta)^{t+1} = a_{t+1}$}
    \end{align*}
    which completes the induction.

    Now we can express this in tilted exponential form by writing $z_c := \log y_{0,c}$ and $Z(a) := \sum_j \alpha_j^* e^{az_j}$.
    Thus \eqref{eq:induction-closed-form} becomes
    \begin{equation}
      \label{eq:tilted-family-full}
      \alpha_{t,c} = \frac{\alpha_c^*\,e^{a_t z_c}}{Z(a_t)}.
    \end{equation}
    This expression quantifies how much component $c$ was initially over- or under-represented relative to the target and still is at time $t$.

    \textbf{Step 2: KL decomposition.}
    Using~\eqref{eq:tilted-family-full}, we decompose the KL divergence as follows:
    \begin{equation}\label{eq:kl-decomp}
        D_{\mathrm{KL}}^\Sigma(p_{\alpha_t} \| p_{\alpha^*}) = \sum_c \alpha_{t, c} \log \frac{\alpha_c^* e^{a_t z_c}}{\alpha_c^*}
        = \sum_c \alpha_{t, c} \left( a_t z_c - \log Z(a_t) \right)
        = a_t \bar{z}_t - \log Z(a_t),
    \end{equation}
    where $\bar{z}_t \coloneq \sum_c \alpha_{t, c} z_c$ is the first moment of $z$ under $\alpha^t$ and $Z(a) \coloneq \sum_j \alpha_j^* e^{a z_j}$ is the normalizing constant.

    \textbf{Step 3: Bounds on each term.}
    Note that $\alpha_t$ is itself a tilted distribution from the exponential form $\alpha_c^t = \alpha_c^* e^{a_t z_c} / Z(a_t)$ and that $Z(a)$ is its moment generating function.
    We can therefore study how its mean $\bar{z}_t$ deviates from the untilted mean $\bar{z}_{\alpha^*} = \sum_c \alpha_c^* z_c$ by treating the tilt parameter $a$ as continuous and tracking the mean $\mu(a) \coloneq \sum_c \nu_{a, c} z_c$ as $a$ varies from $0$ (where $\nu_0 = \alpha^*$) to $a_t$ (where $\nu_{a_t} = \alpha_t$).

    \textit{Upper Bound via Popoviciu's Inequality.}
    Define the log-partition function $\Lambda(a) \coloneq \log Z(a)$.
    Differentiation of this cumulant generating function yields the first moment of the tilted distribution:
    \begin{equation}
        \Lambda'(a) = \frac{\sum_c \alpha_c^* z_c e^{a z_c}}{Z(a)} = \sum_c \nu^{(a)}_c z_c = \mathbb{E}_{\nu_{a}}[z] = \mu(a).
    \end{equation}
    Differentiating once more yields the variance thereof~\citep{wainwright2008graphical}:
    \begin{equation}
        \mu'(a) = \Lambda''(a) = \sum_c \nu_{a, c} z_c^2 - \left(\sum_c \nu_{a, c} z_c\right)^2 = \mathbb{V}_{\nu_{a}}[z].
    \end{equation}
    Since $z$ only takes values in $[\min_c z_c, \max_c z_c]$, we can apply Popoviciu's variance inequality~\citep{popoviciu1935equations} yields $\mathbb{V}_{\nu_a}(z) \leq R^2 / 4$ for all $a$.
    In the last step we define $R \coloneq \max_c \log(\alpha_c^{(0)}/\alpha_c^*) - \min_c \log(\alpha_c^{(0)}/\alpha_c^*)$ for clarity.
     
    Integrating this bound over time from $0$ to $a_t$ yields a bound on the gap $\mu(a_t) - \mu(0) = \int_0^{a_t} \mu'(a) \, da \leq a_t \frac{R^2}{4}$.
    This establishes a bound for the first term in \eqref{eq:kl-decomp} since 
    \begin{equation}\label{eq:mu-bound}
        a_t \bar{z}_t = a_t\mu(a_t) \leq a_t\mu(0) + a_t^2 R^2/4.
    \end{equation}

    \textit{Lower Bound via Jensen's Inequality.}
    We apply Jensen's inequality directly to the convex exponential function in the log-partition function: 
    \begin{equation*}
        Z(a) = \sum_c \alpha_c^* e^{a z_c} \geq \exp\left( a \sum_c \alpha_c^* z_c \right) = e^{a \bar{z}_{\alpha^*}}.
    \end{equation*}
    Hence, we get
    \begin{equation}
      \label{eq:jensen-bound}
      \log Z(a_t) \geq a_t\,\bar z_{\alpha^*}.
    \end{equation}

    \textit{The KL Bound.}
    Substituting~\eqref{eq:mu-bound} and~\eqref{eq:jensen-bound} into~\eqref{eq:kl-decomp}:
    \begin{align*}
      D_{\mathrm{KL}}^\Sigma(p_{\alpha_t}\|p_{\alpha^*})
      &\leq a_t\big(\bar z_{\alpha^*} + a_t R^2/4\big) - a_t\bar z_{\alpha^*}
      = \tfrac{R^2}{4}\,(1-\beta)^{2t}.
    \end{align*}
    Since $p_{\alpha_t}$ and $u_\Sigma$ differ only in their component weights and Assumption~\ref{ass:mix-rejuv} ensures within-component uniformity,
    $D_{\mathrm{KL}}^\Sigma(p_{\alpha_t}\|u_\Sigma) = D_{\mathrm{KL}}^\Sigma(p_{\alpha_t}\|p_{\alpha^*})$, yielding~\eqref{eq:final-kl-bound}.
\end{proofE}

Using this result we derive an asymptotic bound on the worst case number of iterations.
\begin{corollaryE}\label{cor:convergence}
    Assume the conditions of \cref{thm:main} hold. 
    In the worst case, the number of iterations $t$ to reach a KL-divergence $D_{\mathrm{KL}}^\Sigma(p_{\alpha_t} \| p_{\alpha^*}) \leq \varepsilon$ is 
    \begin{equation*}
        t\in \mathcal{O}\left(\frac{\log(1/\varepsilon)+ \log\log N}{\tau}\right)\,.
    \end{equation*}
\end{corollaryE}
\begin{proof}[Proof sketch]
    We show $C_0\in\mathcal{O}\left((\log N)^2\right)$. Bounding \eqref{eq:final-kl-bound} from above by $\varepsilon$ lets us derive
    \begin{equation*}
        t\approx \frac{\log(\varepsilon/C_0)}{2\log\bigl(1-\frac\tau p\bigr)} 
        = \frac{\log(1/\varepsilon)+\log C_0}{-2\log(1-\frac\tau p)} \in \mathcal{O}\left(\frac{\log(1/\varepsilon)+ \log\log N}{\tau}\right)\,.\qedhere
    \end{equation*}
\end{proof}
\begin{proofE}
    We start by providing an upper bound on the initialization constant $C_0$ (\cref{eq:C0}) by considering worst-case initialization.
    We assume using \cref{lem:inital-populated} that each component contains at least one sample, therefore $\alpha_c^{(0)}\in\left[\frac1N,1\right)$. This lets us derive \begin{equation*}
        \max_c \log\frac{\alpha_c^{(0)}}{\alpha_c^*}\leq\max_c \log\frac{1}{\alpha_c^*} = - \min_c \log\alpha_c^*
    \end{equation*} and similarly
    \begin{equation*}
        \min_c \log\frac{\alpha_c^{(0)}}{\alpha_c^*}\geq\min\log\frac{1/N}{\alpha_c^*} = -\log(N)-\max_c\log\alpha_c^*\,.
    \end{equation*}
    Therefore\begin{equation*}
        Z =  \max_c \log\frac{\alpha_c^{(0)}}{\alpha_c^*} - \min_c \log\frac{\alpha_c^{(0)}}{\alpha_c^*}\leq \log N + \max_c\log\alpha_c^* - \min_c \log\alpha_c^*
    \end{equation*}
    and, since $Z>0$, we know that $C_0$ fulfills
    \begin{equation*}
        C_0 = \frac14 Z^2\leq \frac14\left(\log N +\max_c\log \alpha_c^* - \min\log\alpha_c^*\right)^2
    \end{equation*}
    and therefore $C_0\in\mathcal{O}\left((\log N)^2\right)$ since $\alpha^*_c = \sigma_c$ is independent of $N$.
    
    By solving $C_0(1-\frac\tau p)^{2t}=\varepsilon$ for $t$ we then get\begin{equation*}
        t\approx \frac{\log(\varepsilon/C_0)}{2\log\bigl(1-\frac\tau p\bigr)} = \frac{\log(1/\varepsilon)+\log C_0}{-2\log(1-\frac\tau p)}\in\mathcal{O}\left(\frac{\log(1/\varepsilon)+ \log\log N}{\tau}\right)
    \end{equation*}
    since $\beta=\frac\tau p\in (0,1)$ and therefore $\beta\leq-\log(1-\beta)$.
\end{proofE}

\textbf{Component loss} We note that \Cref{thm:main} is only stated in the mean field. 
It might happen that no particles of a particular component are sampled, resulting in that component vanishing. 
The probability of this extinction is  $P_c = (1-(\Phi\alpha)_c)^N\leq\exp\bigl(-N(\Phi\alpha)_c\bigr)$.
Using \Cref{prop:phi-op} we get $(\Phi\alpha)_c\propto\alpha_c^{1-\beta}(\alpha_c^*)^\beta$. 
By \Cref{lem:inital-populated}, each component contains at least one sample, bounding $\alpha_c = N_c/N\geq N^{-1}$. 
Since $\alpha_c^*$ is constant, we can derive an asymptotic upper bound on the extinction-probability as $P_c\in\mathcal{O}\bigl(\exp(-N\alpha_c^{1-\beta})\bigr) = \mathcal{O}\bigl(\exp(-N^\beta)\bigr)$. 
We investigated this effect and found that, in practice, just four chains per component is enough (see \cref{app:mode-loss}).

\section{Experimental Analysis}\label{sec:experiments}
To demonstrate how MASEM improves existing manifold samplers, we consider Non-Linear Hit-\&-Run (NHR)~\citep{toussaint2024nlp} and OLLA~\citep{jeon2025fast} with- and without the proposed resampling logic.
In addition, we compare against Sequentially Constrained MC (SCMC)~\citep{golchi2016sequentially} which anneals soft constraints.
To understand how MASEM compares against explicit component discovery and importance sampling, we implement Cluster-NHR, which clusters samples and computes resampling weights based on the estimated cluster volumes. 
We report the squared Sinkhorn distance and the averaged maximum slack violation. Since SCMC cannot guarantee exact constraint feasibility, we perform an additional projection step before reporting the metrics.
Implementation details are listed in \Cref{app:exp-details} and additional figures can be found in \Cref{app:plots}. \newpage

\begin{table}[t]
    \centering
    \small
    \caption{Final $\mathcal{W}_2^2$ (Sinkhorn) distance performance across five seeds. Entries are mean $\pm$ 95\% confidence interval. Bold indicates better than counterpart for t-test with $p<0.01$ and Holm-Bonferroni correction. MASEM-Sampler outperform their counterpart by an order of magnitude on all non-connected problems.}
\resizebox{\linewidth}{!}{%
    \begin{tabular}{lcccc}
    \toprule
    Problem & NHR & MASEM-NHR & OLLA & MASEM-OLLA \\
    \cmidrule(lr){1-1}\cmidrule(lr){2-3}\cmidrule(lr){4-5}
    Connect.\ Disks ($3d$) & $\tabpm{\phantom{000}.06}{\phantom{0}.01}$ & $\mathbf{\tabpm{\phantom{0}.00}{\phantom{0}.00}}$ & $\tabpm{\phantom{000}.04}{\phantom{0}.00}$ & $\mathbf{\tabpm{\phantom{0}.00}{\phantom{0}.00}}$ \\
    Disconn.\ Disks ($3d$) & $\tabpm{\phantom{00}1.98}{\phantom{0}.12}$ & $\mathbf{\tabpm{\phantom{0}.01}{\phantom{0}.01}}$ & $\tabpm{\phantom{00}1.95}{\phantom{0}.11}$ & $\mathbf{\tabpm{\phantom{0}.02}{\phantom{0}.00}}$ \\
    Seven Lobes ($2d$) & $\tabpm{\phantom{000}.37}{\phantom{0}.02}$ & $\mathbf{\tabpm{\phantom{0}.05}{\phantom{0}.02}}$ & $\tabpm{\phantom{000}.89}{\phantom{0}.08}$ & $\mathbf{\tabpm{\phantom{0}.05}{\phantom{0}.02}}$ \\
    Sine ($2d$)        & $\tabpm{100.21}{4.09}$                    & $\mathbf{\tabpm{\phantom{0}.13}{\phantom{0}.06}}$ & $\tabpm{128.38}{3.79}$ & $\mathbf{\tabpm{\phantom{0}.13}{\phantom{0}.07}}$ \\
    Swiss Roll ($2d$) & $\tabpm{\phantom{00}6.08}{\phantom{0}.75}$ & $\mathbf{\tabpm{\phantom{0}.08}{\phantom{0}.02}}$ & $\tabpm{\phantom{0}18.96}{1.15}$ & $\mathbf{\tabpm{\phantom{0}.29}{\phantom{0}.09}}$ \\
    \bottomrule
    \end{tabular}
}
\label{tab:sd_final}
\end{table}
\begin{wraptable}{R}{0.55\textwidth}
\centering
\small
\caption{Final $\mathcal{W}_2^2$ (Sinkhorn) distance and slack across five seeds. Entries are mean $\pm$ 95\% confidence interval. Bold indicates best for t-test with $p<0.01$ and Holm-Bonferroni correction.}
\resizebox{\linewidth}{!}{%
\begin{tabular}{lccc}
\toprule
$\mathcal{W}_2^2$ & SCMC & Cluster-NHR & MASEM-NHR \\
\midrule
C.~Disks & $\tabpm{\phantom{000}.12}{\phantom{00}.01}$ & $\tabpm{\phantom{0}.05}{\phantom{0}.03}$ & $\mathbf{\tabpm{\phantom{0}.00}{\phantom{0}.00}}$ \\
D.~Disks & $\tabpm{\phantom{000}.59}{\phantom{00}.23}$ & $\tabpm{\phantom{0}.19}{\phantom{0}.14}$ & $\mathbf{\tabpm{\phantom{0}.01}{\phantom{0}.01}}$ \\
Seven Lobes & $\tabpm{\phantom{000}.12}{\phantom{00}.06}$ & $\tabpm{\phantom{0}.14}{\phantom{0}.04}$ & $\mathbf{\tabpm{\phantom{0}.05}{\phantom{0}.02}}$ \\
Sine & $\tabpm{126.79}{10.85}$                            & $\tabpm{\phantom{0}.91}{1.04}$ & $\tabpm{\phantom{0}.13}{\phantom{0}.06}$ \\
Swiss Roll & $\tabpm{\phantom{00}1.10}{\phantom{00}.42}$ & $\tabpm{\phantom{0}.42}{\phantom{0}.26}$ & $\mathbf{\tabpm{\phantom{0}.08}{\phantom{0}.02}}$ \\
\midrule
Slack $\times 10^{-3}$ & SCMC & Cluster-NHR & MASEM-NHR \\
\midrule
C.~Disks & $\tabpm{20.46}{\phantom{0}1.12}$ & $\tabpm{\phantom{0}.02}{\phantom{0}.03}$ & $\tabpm{\phantom{0}.01}{\phantom{0}.01}$ \\
D.~Disks & $\tabpm{29.97}{13.61}$ & $\tabpm{\phantom{0}.02}{\phantom{0}.03}$ & $\tabpm{\phantom{0}.01}{\phantom{0}.01}$ \\
Seven Lobes & $\tabpm{\phantom{0}3.00}{\phantom{0}1.89}$ & $\tabpm{\phantom{0}.00}{\phantom{0}.00}$ & $\tabpm{\phantom{0}.00}{\phantom{0}.00}$ \\
Sine & $\tabpm{\phantom{0}9.38}{\phantom{0}1.48}$ & $\mathbf{\tabpm{\phantom{0}.00}{\phantom{0}.00}}$ & $\tabpm{\phantom{0}.01}{\phantom{0}.01}$ \\
Swiss Roll & $\tabpm{\phantom{0}2.77}{\phantom{00}.71}$ & $\tabpm{\phantom{0}.00}{\phantom{0}.00}$ & $\mathbf{\tabpm{\phantom{0}.00}{\phantom{0}.00}}$ \\
\bottomrule
\end{tabular}%
}
\label{tab:sd_final_glob}
\end{wraptable}

\subsection{Synthetic Benchmarks}
We first evaluate sampling on low-dimensional manifolds: (1)-(2) two disks that are embedded on a sphere in $\RRR^3$ and we vary whether the disks are connected or not, (3) the seven lobes density from ~\citet{jeon2025fast} with constant $f$, (4) a sine equality constraint with reducing amplitude cut into disconnected components inequality constraint, and (5) a product of an Archimedean spiral manifold and randomly sampled circles combined with a nonlinear inequality we label swiss roll. 
The benchmarks are visualized in \Cref{fig:2dbenchs}, and we report the specific constraints and ground truth sampling method in \Cref{app:synth-constraints}.
For each problem, we run $2\,000$ independent chains in parallel for $5\,000$ steps and collect the last sample per chain.
\begin{wrapfigure}{R}{0.35\textwidth}
    \centering
    \includegraphics[width=\linewidth]{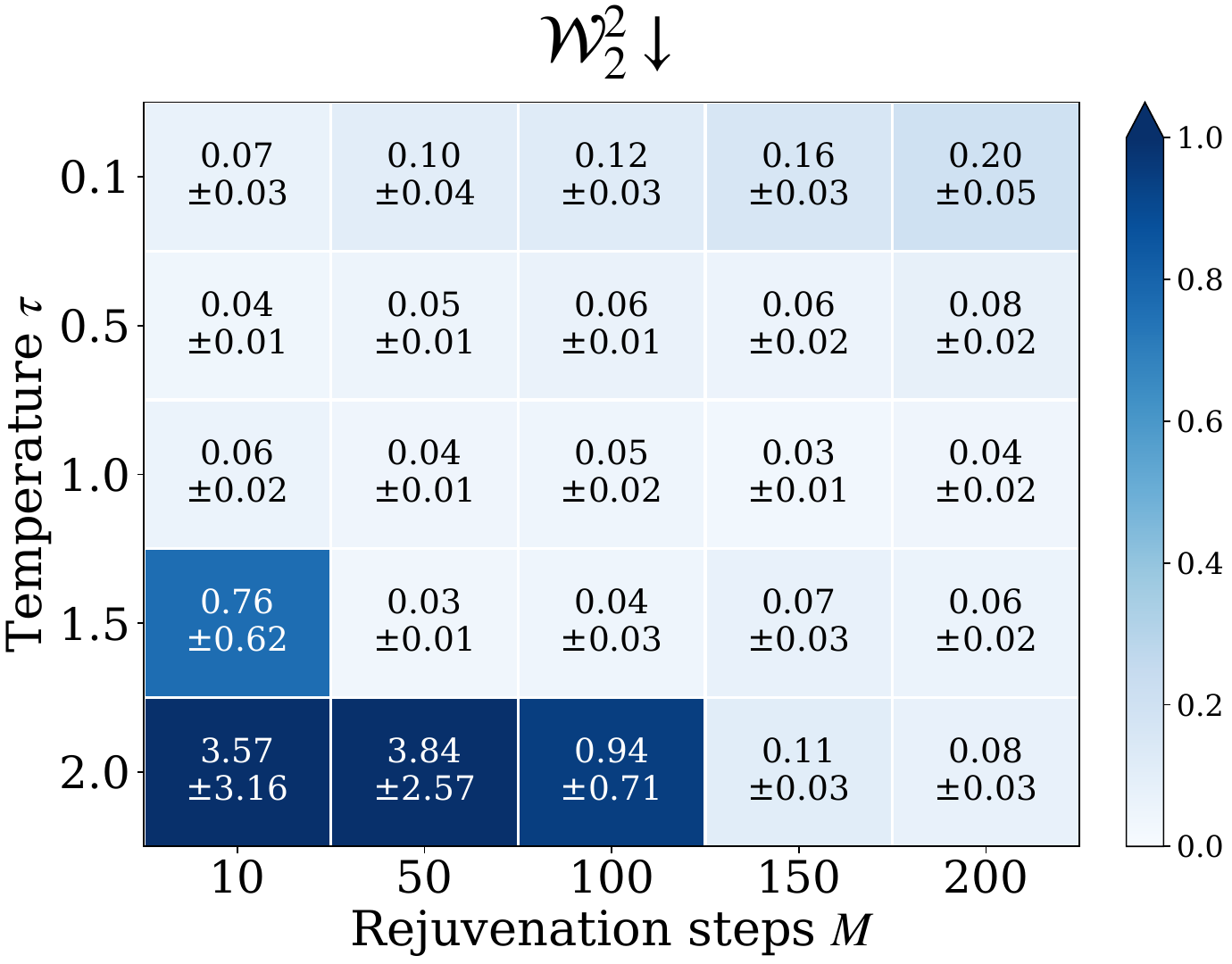}
    \caption{Influence of $\tau$ and $M$ hyperparameters for MASEM-NHR on the 7 lobes problem. We mean $\mathcal{W}_2^2$ distance across 5 seeds with $95\%$ CI.}
    \label{fig:tau_grid}
\end{wrapfigure}

\textbf{Local Kernels Fail Under Disconnected Components.}
In \Cref{fig:toy_spheres} we identify a common failure mode of standard manifold samplers: In the presence of disconnected components, these local methods fail to correctly allocate mass; both methods over sample the lower disk on the sphere and under sample the left-most arc of the sine.
While the connected disks problem permits the chains to mix well in the single feasible level set, separating the two spheres on the manifold leads to an increase in sampling error, as shown in \Cref{tab:sd_final}.

\textbf{Sampling Accuracy \& Constraint Violation of MASEM.}
As shown in \Cref{tab:sd_final}, MASEM-NHR and MASEM-OLLA outperform their counterparts across all problems with a disconnected feasible set by an order of magnitude in the Sinkhorn distance $\mathcal{W}_2^2$.
Further, MASEM minimizes the constraints, matching (or even outperforming) their local counterparts (\Cref{fig:syn_conv}).
We also see in \Cref{tab:sd_final_glob} that MASEM yields superior performance compared to the global baselines.
In particular, we observe that while SCMC effectively yields constraint satisfying samples on the seven lobes problem, it fails to do so in particular on problems with highly non-linear constraints such as the swiss roll problem.
Cluster-NHR, in contrast, achieves low constraint violations but higher sampling error than MASEM on all problems.


\textbf{Effect of Hyperparameters $\tau$ and $M$.}
We further examine the influence of the weight scaling hyperparameter $\tau$ and the number of rejuvenation steps $M$ on the seven lobes problem.
As shown in \Cref{fig:tau_grid}, our method is robust to variations of both hyperparameters.
However, excessively large $\tau$ values paired with short mixing times lead to high Sinkhorn distances to ground truth samples.
Additionally, $\tau=0.1$ might lead to component loss, as demonstrated in \Cref{app:mode-loss}.

\subsection{Scaling Under High-Dimensionality and Large Number of Constraints}\label{sec:scaling_exp}
We assess the robustness and scalability of MASEM-NHR and MASEM-OLLA using a synthetic \textit{stress-test} problem that enables explicit control of ambient dimension $d$ as well as the number of equality and inequality constraints and number of disconnected components $(m, l, |C|)$.
In our case, we guarantee different components by sampling a set of $|C|$ disjoint spheres, which we embed into a subspace of $\RRR^d$ by random linear projections defined by the $m$ equality constraints.


\textbf{Scaling Under Increasing Manifold Dimension.}
With fixed numbers of constraints $m=l=5$ we study the effect of an increasing manifold dimension as $d$ increases.
\Cref{fig:scaling_inc_m} shows
that MASEM consistently outperforms NHR and OLLA as well as Cluster-NHR by an order of magnitude.
In addition, we note that the penalty-based approach of SCMC deteriorates strongly as $d$ increases.
Further results in \Cref{fig:scaling_conv} show a similar trend for the KL divergence while the maximum constraint violation stays below $0.1$ for MASEM.

\textbf{Scaling Under Number of Constraints.}
\Cref{fig:scaling_inc_dim} illustrates the performances as the ambient dimension $d$ increases while the manifold dimension is fixed to 2.
We see that the gap between MASEM and its counterparts remains constant across dimensions while the $\mathcal{W}_2^2$ distance only slightly increases with $d$.
The wallclock times are competitive compared to the baselines that evaluate the constraint Jacobian (OLLA, NHR, Cluster-NHR).
This shows that our method adds only little overhead in practice.
Additional plots for the KL divergence and constraint violation in \Cref{fig:scaling_conv} further support these results.


\begin{figure}[t]
    \centering
    \includegraphics[width=.8\linewidth]{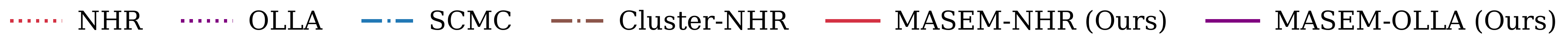}\\
    \begin{subfigure}{0.45\textwidth}
        \centering
        \includegraphics[width=.5\linewidth]{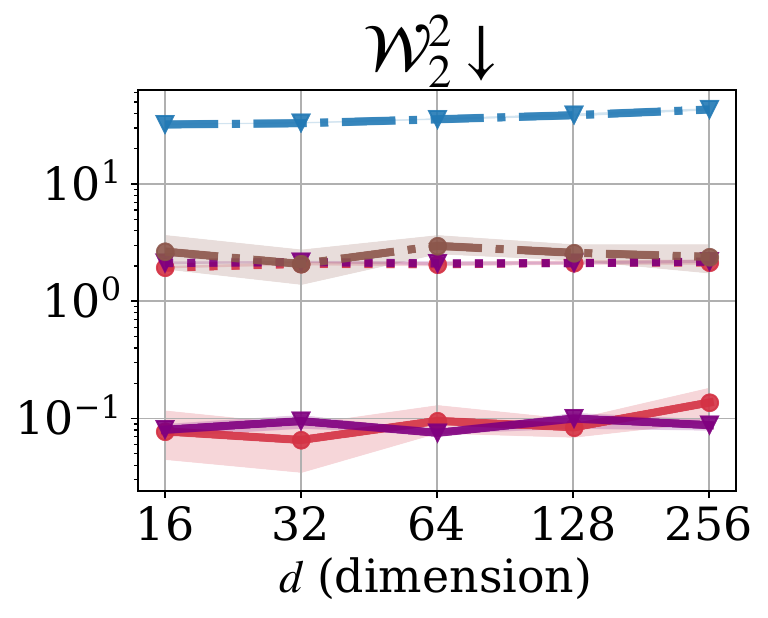}\hfill
        \includegraphics[width=.5\linewidth]{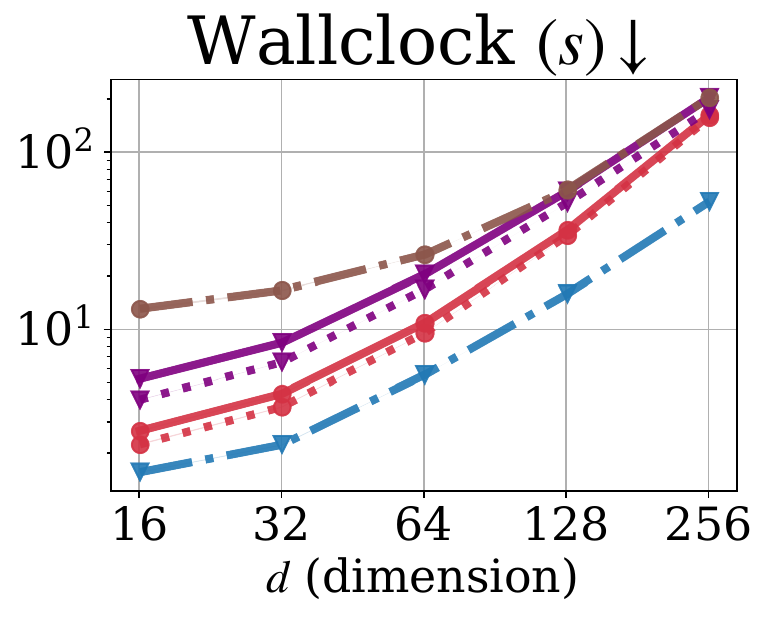}
        \caption{Increasing ambient and fixed manifold dimension with increasing $m=d-3$.}
        \label{fig:scaling_inc_dim}
    \end{subfigure}
    \hfill
    \begin{subfigure}{0.45\textwidth}
        \centering
        \includegraphics[width=.5\linewidth]{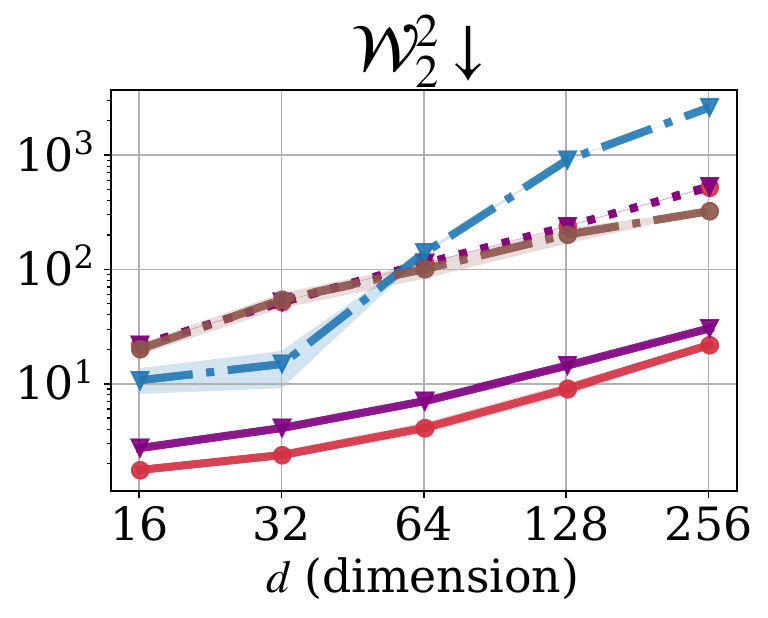}\hfill
        \includegraphics[width=.5\linewidth]{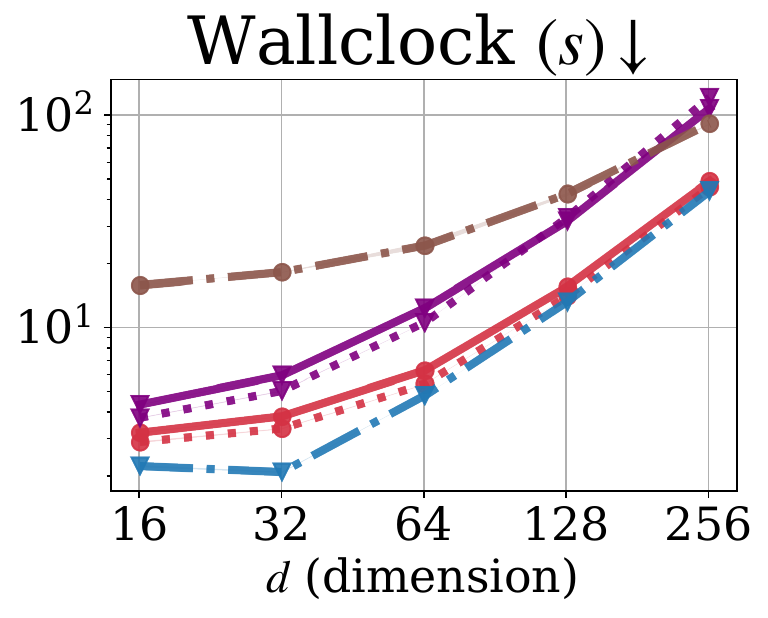}
        \caption{Increasing ambient and manifold dimension with fixed $m=5$.}
        \label{fig:scaling_inc_m}
  \end{subfigure}

    \caption{Sampling performance and wallclock time as the dimension $d$ increases (with $l = |C| = 5$). MASEM achieves lowest $\mathcal{W}_2^2$ (Sinkhorn) distances.}\vspace{5pt}
    \label{fig:scaling}
\end{figure}

\subsection{Robotics Applications}

\textbf{Motion Planning.} We consider trajectory sampling, where we sample constraint satisfying trajectories through an obstacle course. 
We sample the trajectories of a 2d pointmass across two different obstacle courses, one that is a regular $4\times 4$ grid and another one with 20 randomly placed obstacles (\Cref{fig:mp_lines}).
Each trajectory is parameterized by a spline defined by $3$ waypoints, resulting in a $9$ dimensional problem. 
Random obstacle motion planning has $934$ constraints in total, while grid based planning has $774$ inequality\tb{and how many eq/total? I think its best to report both} constraints.

\textbf{Grasp Sampling.}  
Secondly, we evaluate MASEM for grasping. 
We optimize grasps on a capsule geometry (a common shape for collision checking) with three fingers. 
A band around the capsule shall not be touched, mirroring grasping in the real world, where certain objects cannot be grasped at arbitrary points.\footnote{For example, a robot grasping a plate should not put its finger into the food.} 
Each finger applies a force, such that in the end the fingers counteract gravity pulling on the object. This results in a $18$-dimensional problem, with $9$ equality and $48$ inequality constraints.

\begin{table}[h]
\centering
\small
\caption{Performance on robotics problems across seeds. Entries are mean $\pm$ 95\% confidence interval. Column-wise bold highlights indicate better than counterpart for t-test with p < 0.01 and Holm-Bonferroni correction.}
\begin{tabular}{lcccccc}
\toprule
Method & \multicolumn{2}{c}{Planning Random Obst.} & \multicolumn{2}{c}{Planning Grid Obst.} & \multicolumn{2}{c}{Grasping} \\
\cmidrule(lr){2-3} \cmidrule(lr){4-5} \cmidrule(lr){6-7}
 & Feas. Ent. $\uparrow$ & Slack $\downarrow$ & Feas. Ent. $\uparrow$ & Slack $\downarrow$ & Feas. Ent. $\uparrow$ & Slack $\downarrow$ \\
\midrule
NHR & $\tabpm{6.85}{\phantom{0}.37}$ & $\tabpm{\phantom{0}.00}{\phantom{0}.00}$ & $\tabpm{6.50}{\phantom{0}.14}$ & $\tabpm{\phantom{0}.00}{\phantom{0}.00}$ & $\tabpm{3.61}{\phantom{0}.23}$ & $\tabpm{\phantom{0}.00}{\phantom{0}.00}$ \\
Cluster-NHR & $\tabpm{6.49}{1.43}$ & $\tabpm{\phantom{0}.00}{\phantom{0}.00}$ & $\tabpm{4.41}{1.27}$ & $\tabpm{\phantom{0}.00}{\phantom{0}.00}$ & $\tabpm{3.60}{1.05}$ & $\tabpm{\phantom{0}.00}{\phantom{0}.00}$ \\
MASEM-NHR & $\mathbf{\tabpm{7.93}{\phantom{0}.22}}$ & $\tabpm{\phantom{0}.00}{\phantom{0}.00}$ & $\mathbf{\tabpm{7.49}{\phantom{0}.16}}$ & $\tabpm{\phantom{0}.00}{\phantom{0}.00}$ & $\mathbf{\tabpm{4.79}{\phantom{0}.16}}$ & $\tabpm{\phantom{0}.00}{\phantom{0}.00}$ \\
OLLA & $\tabpm{\phantom{0}.86}{\phantom{0}.15}$ & $\tabpm{\phantom{0}.07}{\phantom{0}.01}$ & $\tabpm{1.47}{\phantom{0}.24}$ & $\tabpm{\phantom{0}.03}{\phantom{0}.01}$ & $\tabpm{1.61}{\phantom{0}.09}$ & $\tabpm{\phantom{0}.37}{\phantom{0}.01}$ \\
MASEM-OLLA & $\tabpm{6.11}{\phantom{0}.47}$ & $\tabpm{\phantom{0}.01}{\phantom{0}.00}$ & $\tabpm{5.35}{\phantom{0}.38}$ & $\tabpm{\phantom{0}.01}{\phantom{0}.00}$ & $\tabpm{4.09}{\phantom{0}.18}$ & $\tabpm{\phantom{0}.01}{\phantom{0}.00}$ \\
SCMC & $\tabpm{1.43}{\phantom{0}.69}$ & $\tabpm{\phantom{0}.05}{\phantom{0}.04}$ & $\tabpm{2.26}{\phantom{0}.26}$ & $\tabpm{\phantom{0}.02}{\phantom{0}.01}$ & $\tabpm{\phantom{0}.22}{\phantom{0}.04}$ & $\tabpm{\phantom{0}.08}{\phantom{0}.00}$ \\
\bottomrule
\end{tabular}
\label{tab:robotics}
\end{table}
\textbf{Results.} Due to the lack of ground truth samples, we report the feasible entropy instead of the Sinkhorn distance.
We compute it by multiplying the entropy of the feasible samples and the fraction of feasible samples (made precise in \Cref{app:metrics}).
\Cref{tab:robotics} shows an increase of sample entropy when applying MASEM, regardless of sampler.
This is illustrated in \Cref{fig:mp_lines} for random obstacles. We observe that MASEM-NHR samples trajectories through the narrow gaps at the center of the course, which are missed by standard NHR.
For OLLA the performance gap is even larger, as the default sampler only generates a locally distributed sample sets but fails to sample the outer regions of the domain. Moreover, OLLA generates many paths crossing over obstacles. 
%
%
While MASEM-NHR outperforms MASEM-OLLA on all problems, OLLA benefits more from the resampling applied by MASEM. 
This suggests two things: first, NHR-based samplers are more suited for robotics than OLLA-based samplers (at least within this evaluation set), and secondly, MASEM greatly improves previously unsuited samplers and makes them competitive in this domain. We believe these are valuable for future research on sampling in robotics. 
%
Further, our approach outperforms alternative global sample allocation strategies: SCMC performs poorly on all tasks;  MASEM-NHR outperforms its clustering counterpart on all tasks as well, with a large gap on the regular grid design.
The performance of MASEM on the motion planning problems demonstrates its ability to scale to problems with a high number of constraints.
\begin{figure}[h]
    \centering
    \includegraphics[width=\linewidth]{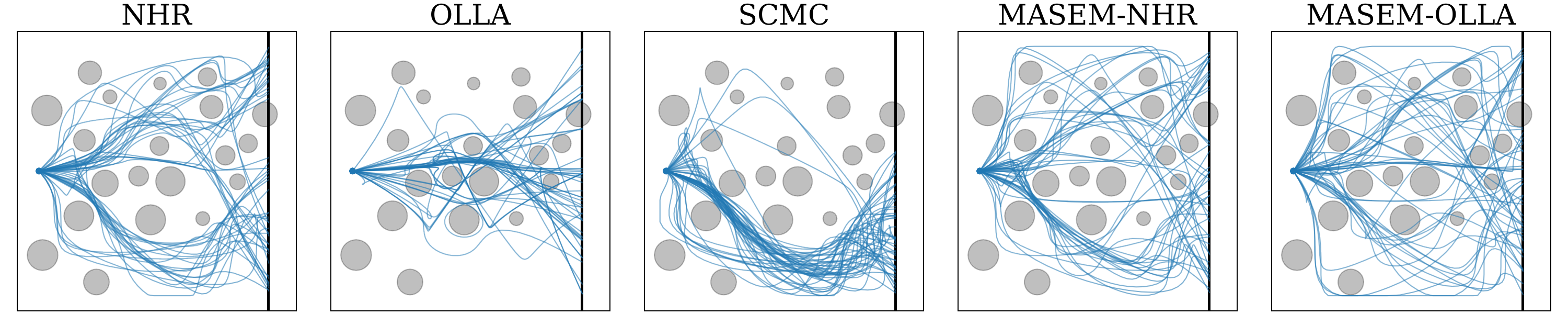}
    \caption{Plots of 50 samples on the random motion planning problem. MASEM-based approaches yield the highest sample entropy across all methods.}
    \label{fig:mp_lines}
\end{figure}
\section{Conclusion \& Future Work}
We presented Manifold Sampling via Entropy Maximization (MASEM), a resampling-based approach for uniform sampling on disconnected manifolds implicitly defined by constraints.
We proved that MASEM minimizes the KL divergence to the uniform target distribution exponentially in the number of resampling steps and provided worst-case bounds in the mean-field.
Building on this, we analyzed two instantiations of MASEM on several synthetic and real world problems. While the method introduces new parameters to tune and a small runtime overhead (which we discuss together with theoretical limitations in \Cref{app:limitations}),
it results in  a significantly lower Sinkhorn distance to the ground truth distribution compared to the baselines. Moreover, it is designed to work with any constrained MCMC-Sampler. 
Future research could investigate which properties make a sampler work well with MASEM.
In this work we focus on entropy maximization, which leads to uniform sampling across the manifold. 
Future work could explore generalizations to non-uniform distributions. 
We see potential in applying MASEM as a data generation method in robotics and other application domains.
An intriguing avenue of research is investigating practical design choices and comparing them to current heuristic approaches in practical applications.

%

\FloatBarrier
\newpage
\begin{ack}
The authors sincerely thank Paula Cordero Encinar for discussions during early stages of the project, and Eckart Cobo-Briesewitz for his feedback on the manuscript. 
This research was funded by the Amazon Fulfillment Technologies and Robotics team.
This work has been supported by the German Federal Ministry of Research, Technology and Space (BMFTR) under the Robotics Institute Germany (RIG).
\end{ack}

\bibliography{references-uppercase-titles} 
\bibliographystyle{unsrtnat}

\newpage
\appendix\crefalias{section}{appendix}
\crefalias{subsection}{appendix}
\section{Formal Lemmas and Proofs}\label{app:proofs}
We begin by formalizing \Cref{lemma:ent_kl} introduced in \Cref{sec:prelim}.
\setcounter{lemma}{0}
\begin{lemma}
    Let $\Sigma$ be measurable with $0<\sigma_\Sigma(\Sigma)<\infty$, and let
    \begin{equation*}
    u_\Sigma(x)\coloneq \frac{1}{S}\quad\text{for }S=\sigma_\Sigma(\Sigma)
    \end{equation*}
    denote the uniform density on $\Sigma$ with respect to $\sigma_\Sigma$.
    Then it holds that $\log S = H(u_\Sigma)\geq \Hent(\rho_\Sigma)$ for all densities $\rho_\Sigma$ supported on $\Sigma$ with equality iff $\rho = u_\Sigma$ almost everywhere.
\end{lemma}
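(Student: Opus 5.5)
The plan is to reduce the statement to Gibbs' inequality, i.e.\ nonnegativity of the KL divergence with respect to the uniform density. Take a density $\rho_\Sigma$ with respect to $\sigma_\Sigma$, supported on $\Sigma$, with $\int_\Sigma \rho_\Sigma\,d\sigma_\Sigma = 1$ and differential entropy
\[
\Hent(\rho_\Sigma) = -\int_\Sigma \rho_\Sigma \log \rho_\Sigma \, d\sigma_\Sigma .
\]
We adopt the convention $0\log 0 = 0$, and assume the entropy is well defined, possibly equal to $-\infty$; in that case the inequality is trivial.

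\textbf{Step 1: entropy of the uniform density.} Since $u_\Sigma \equiv 1/S$ on $\Sigma$ and integrates to one, a direct computation gives
\[
H(u_\Sigma) = -\int_\Sigma \tfrac1S \log \tfrac1S \, d\sigma_\Sigma = \log S .
\]

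\textbf{Step 2: rewrite the gap as a KL divergence.} On the set $\{\rho_\Sigma > 0\}$, write $\log \rho_\Sigma = \log(\rho_\Sigma/u_\Sigma) - \log S$. Integrating against $\rho_\Sigma$ and using that $\rho_\Sigma$ has total mass one yields
\[
\log S - \Hent(\rho_\Sigma) = \int_\Sigma \rho_\Sigma \log \frac{\rho_\Sigma}{u_\Sigma}\, d\sigma_\Sigma = D_{\mathrm{KL}}(\rho_\Sigma \,\|\, u_\Sigma).
\]

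\textbf{Step 3: nonnegativity and the equality case.} To show $D_{\mathrm{KL}} \ge 0$, apply Jensen's inequality to the strictly concave function $\log$ under the probability measure $\rho_\Sigma\, d\sigma_\Sigma$:
\[
-D_{\mathrm{KL}} = \int \rho_\Sigma \log \frac{u_\Sigma}{\rho_\Sigma} \le \log \int_{\{\rho_\Sigma>0\}} u_\Sigma \, d\sigma_\Sigma \le \log 1 = 0 .
\]
For equality, both inequalities must be tight. The second is tight exactly when $\{\rho_\Sigma>0\}$ has full $u_\Sigma$-measure. By strict concavity, the first is tight exactly when $u_\Sigma/\rho_\Sigma$ is $\rho_\Sigma$-a.s.\ constant. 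Together these force $\rho_\Sigma = c\,u_\Sigma$ a.e., and normalization gives $c=1$. Conversely, if $\rho_\Sigma = u_\Sigma$ a.e., equality holds trivially by Step 1.

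\textbf{Main obstacle: integrability.} The only delicate point is justifying the splitting in Step 2 when the integrals might not be finite. I would handle this without splitting, using the pointwise inequality
\[
\rho \log\frac{\rho}{u} \ge \rho - u .
\]
Its right-hand side is integrable because $\sigma_\Sigma(\Sigma) < \infty$. The negative part of the KL integrand is therefore integrable, so $D_{\mathrm{KL}}$ is well defined in $[0,\infty]$. Since $\log S$ is finite, the identity in Step 2 then holds in the extended sense: $\Hent(\rho_\Sigma) = -\infty$ corresponds exactly to $D_{\mathrm{KL}} = \infty$. The same inequality also gives the equality case, since the integrand minus $(\rho - u)$ is nonnegative and vanishes only where $\rho = u$.
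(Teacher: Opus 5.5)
Your proposal is correct and follows the paper's proof: compute $\Hent(u_\Sigma)=\log S$, identify $\log S-\Hent(\rho_\Sigma)$ with $D_{\mathrm{KL}}(\rho_\Sigma \| u_\Sigma)$, and conclude with Gibbs' inequality, including its equality case. The paper simply cites Gibbs' inequality. You go further: you prove it, via Jensen or the pointwise bound $\rho\log(\rho/u)\ge \rho-u$, and you handle the integrability and $\Hent(\rho_\Sigma)=-\infty$ cases, which the paper leaves implicit.
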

\begin{proof}
    We compute the entropy of the uniform density as 
    \begin{equation*}
        \Hent(u_\Sigma)
        = - \int_\Sigma \frac{1}{S} 
           \log \left(\frac{1}{S}\right) d\sigma_\Sigma
        = \log S
    \end{equation*}
    Next, consider the relative entropy of $\rho$ \wrt $u_\Sigma$:
    \begin{align*}
        D_{KL}(\rho\| u_\Sigma)
        &=\int_\Sigma \rho\log\frac{\rho}{u_\Sigma}d\sigma_\Sigma
        =\int_\Sigma \rho\log S~d\sigma_\Sigma + \int_\Sigma \rho \log \rho~d\sigma_\Sigma \\
        &= \log S - \Hent(\rho)  = \Hent(u_\Sigma)-\Hent(\rho).
    \end{align*}
    Now, by Gibbs' inequality $0\leq D_{KL}(\rho \| u_\Sigma)$
    with equality iff $\rho=u_\Sigma$ almost everywhere, proving the statement.
\end{proof}
Next, we restate the results of \Cref{sec:analysis} and provide the complete proofs.

\printProofs

\section{Additional Figures}\label{app:plots}
\subsection{Synthetic Benchmarks Convergence and Qualitative Results}
We list the full learning curves in \Cref{fig:syn_conv}. 
These curves illustrate the evolution of Sinkhorn and KL divergence as well as constraint violation. More details on the metrics can be found in \Cref{app:metrics}.
Overall we observe strong performance and fast convergence of both MASEM variants on all problems.
While KL divergence is low for all methods on the connected disk problem, separating the disks as depicted in \Cref{fig:2dbenchs} leads to a performance deterioration of all baselines as shown in \Cref{fig:syn_conv}. Even for the connected disks, MASEM leads to faster convergence.

\newpage
\begin{figure}[p]
    \centering
    \includegraphics[width=.8\linewidth]{content/figures/standalone_legend.pdf}
\begin{minipage}{\textwidth}
    \begin{minipage}{0.05\linewidth}
        \centering
        \rotatebox{90}{\small Connected Disks}
    \end{minipage}
    \begin{minipage}{0.9\linewidth}
        \begin{subfigure}{\linewidth}
            \includegraphics[width=\linewidth, trim={.2cm .2cm 0.2cm 0.2cm}, clip]{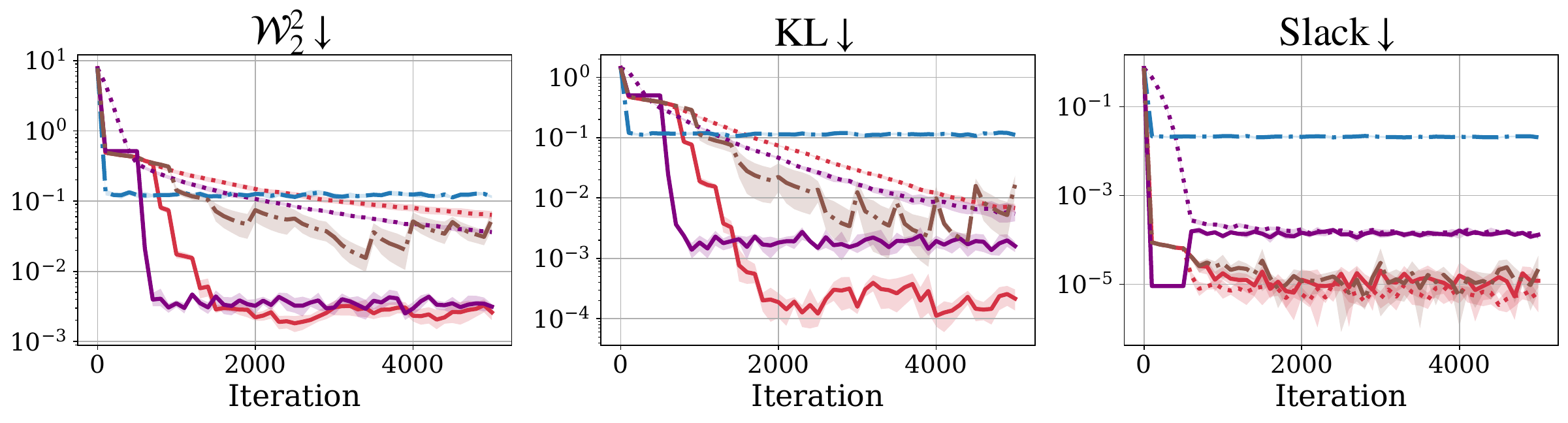}
        \end{subfigure}
    \end{minipage}
\end{minipage}

\begin{minipage}{\textwidth}
    \begin{minipage}{0.05\linewidth}
        \centering
        \rotatebox{90}{\small Disconnected Disks}
    \end{minipage}
    \begin{minipage}{0.9\linewidth}
        \begin{subfigure}{\linewidth}
            \includegraphics[width=\linewidth, trim={.2cm .2cm 0.2cm 0.2cm}, clip]{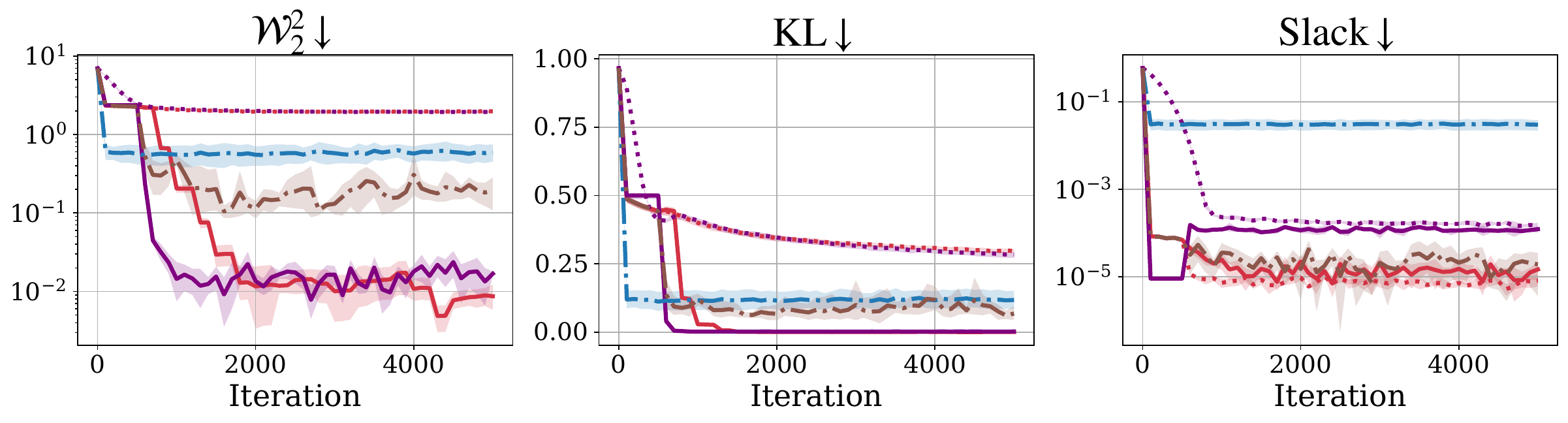}
        \end{subfigure}
    \end{minipage}
\end{minipage}

\begin{minipage}{\textwidth}
    \begin{minipage}{0.05\linewidth}
        \centering
        \rotatebox{90}{\small Seven lobes}
    \end{minipage}
    \begin{minipage}{0.9\linewidth}
        \begin{subfigure}{\linewidth}
            \includegraphics[width=\linewidth, trim={.2cm .2cm 0.2cm 0.2cm}, clip]{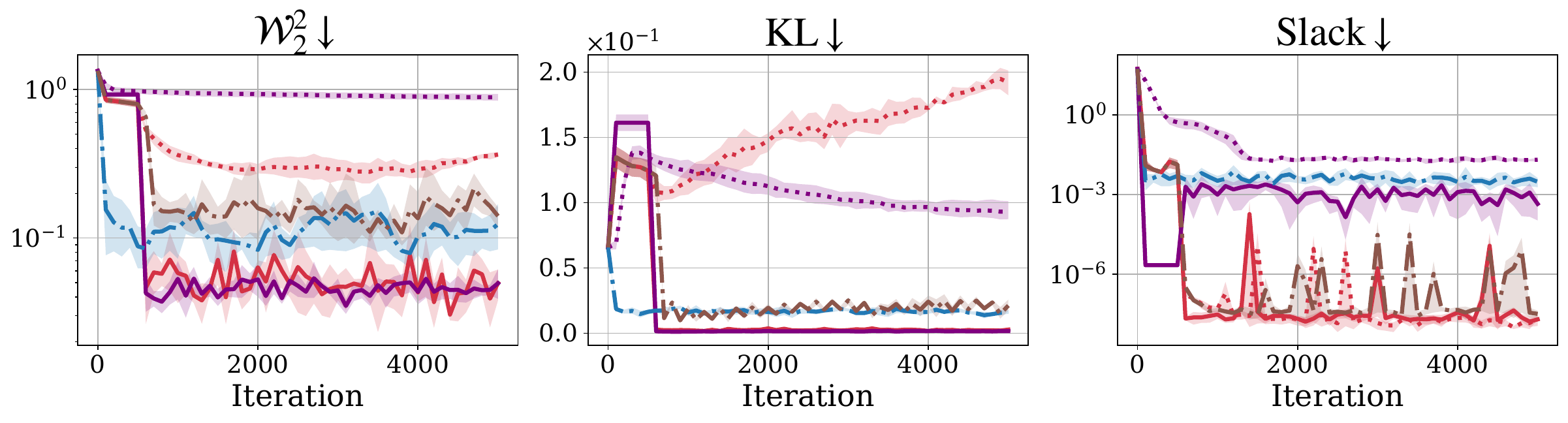}
        \end{subfigure}
    \end{minipage}
\end{minipage}

\begin{minipage}{\textwidth}
    \begin{minipage}{0.05\linewidth}
        \centering
        \rotatebox{90}{\small Sine}
    \end{minipage}
    \begin{minipage}{0.9\linewidth}
        \begin{subfigure}{\linewidth}
            \includegraphics[width=\linewidth, trim={.2cm .2cm 0.2cm 0.2cm}, clip]{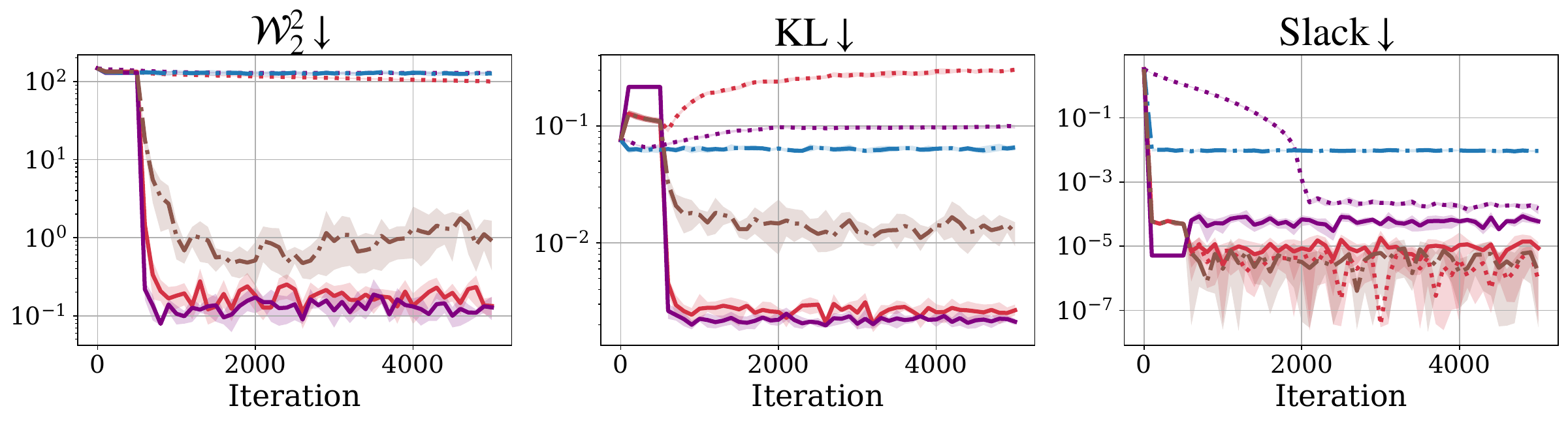}
        \end{subfigure}
    \end{minipage}
\end{minipage}

\begin{minipage}{\textwidth}
    \begin{minipage}{0.05\linewidth}
        \centering
        \rotatebox{90}{\small Swiss roll}
    \end{minipage}
    \begin{minipage}{0.9\linewidth}
        \begin{subfigure}{\linewidth}
            \includegraphics[width=\linewidth, trim={.2cm .2cm 0.2cm 0.2cm}, clip]{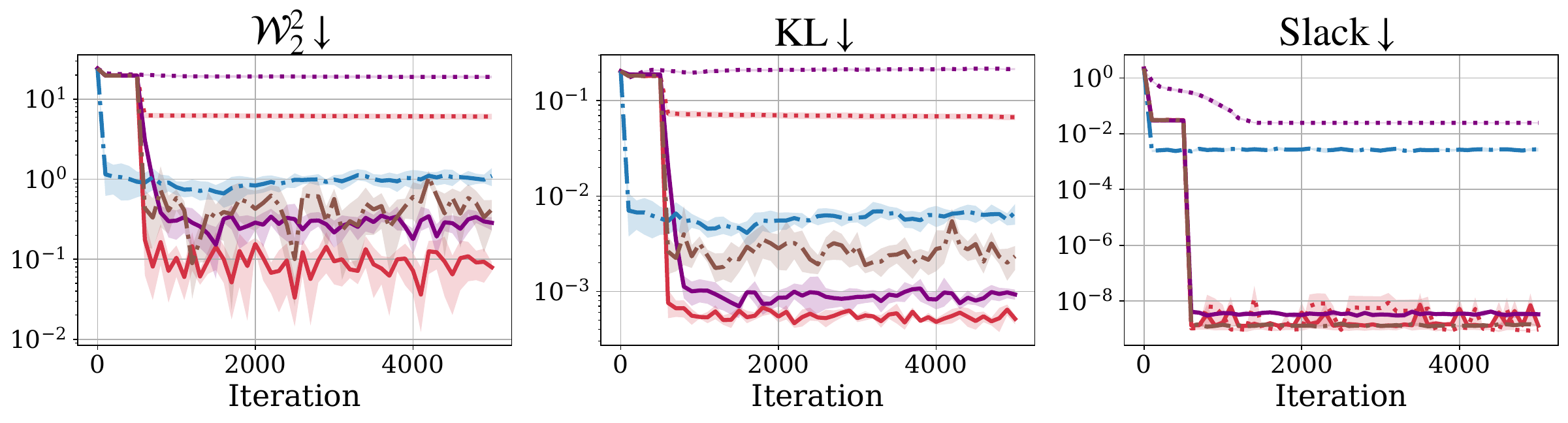}
        \end{subfigure}
    \end{minipage}
\end{minipage}
\caption{Convergence over 5000 iterations on synthetic benchmarks. From left to right: (1) squared Sinkhorn distance to ground truth samples, (2) KL divergence to pairwise distances of ground truth samples, and (3) mean maximum constraint violation $\mathbb{E}_x[\max \{|h(x)|, g^+(x)\}]$. Solid lines and shaded bands show the mean and $95\%$ CI over five independent runs. Both MASEM-NHR and MASEM-OLLA quickly decrease constraint violation and maintain it there, while achieving the lowest $\mathcal{W}_2^2$ and $\text{KL}$ values.}
    \label{fig:syn_conv}
\end{figure}

\begin{figure}[t]
    \centering
    \includegraphics[width=\linewidth, trim={.2cm .2cm 0.2cm 0.2cm}, clip]{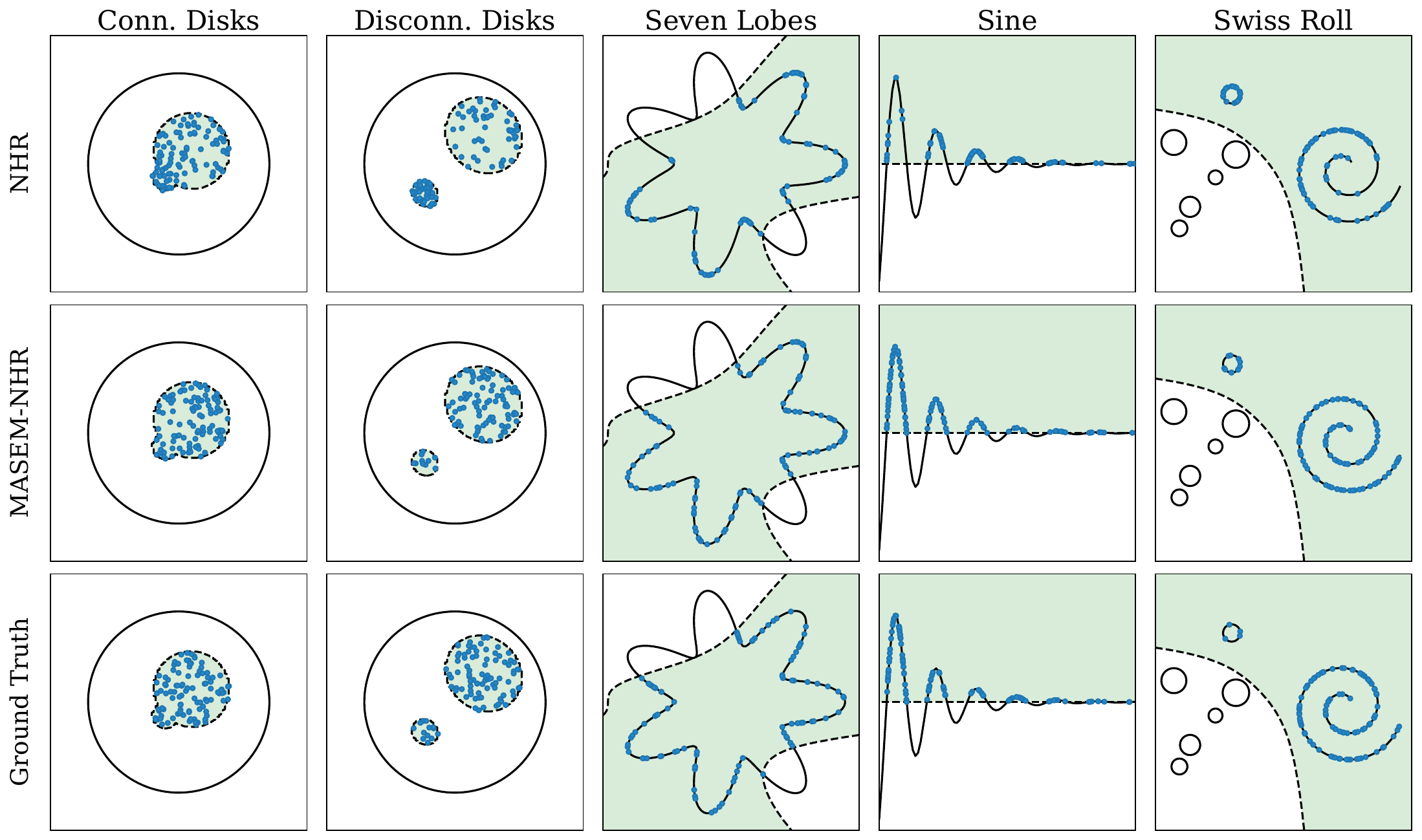}
    \caption{Qualitative results for NHR and MASEM-NHR on all 2d benchmark problems. Black lines show equality constraints, and green shaded areas mark the
feasible region(s) by inequality constraints. NHR struggles with moving across components, as can be seen in the disconnected disk and the sine benchmark. Moreover, on manifolds with high curvature, NHR also struggles to distribute particles correctly, since it works by locally linearizing the equality constraints. This can be seen with the seven lobes and the swiss roll benchmark. In both cases, MASEM is able to improve the distribution of particles.}
    \label{fig:2dbenchs}
\end{figure}


\newpage
\subsection{Scaling Analysis}
\begin{figure}[H]
\includegraphics[width=0.8\textwidth]{content/figures/standalone_legend.pdf}
\begin{minipage}{\textwidth}
    \begin{minipage}{0.05\linewidth}
        \centering
        \rotatebox{90}{\parbox{2cm}{\centering\small Scaling $d$ ($m=5$)}}
    \end{minipage}
    \begin{minipage}{0.94\linewidth}
        \begin{subfigure}{\linewidth}
        \centering
\centering
    \includegraphics[width=0.25\linewidth]{content/figures/scaling/disconnected_stress_sd_proj.pdf}\hfill
    \includegraphics[width=0.25\linewidth]{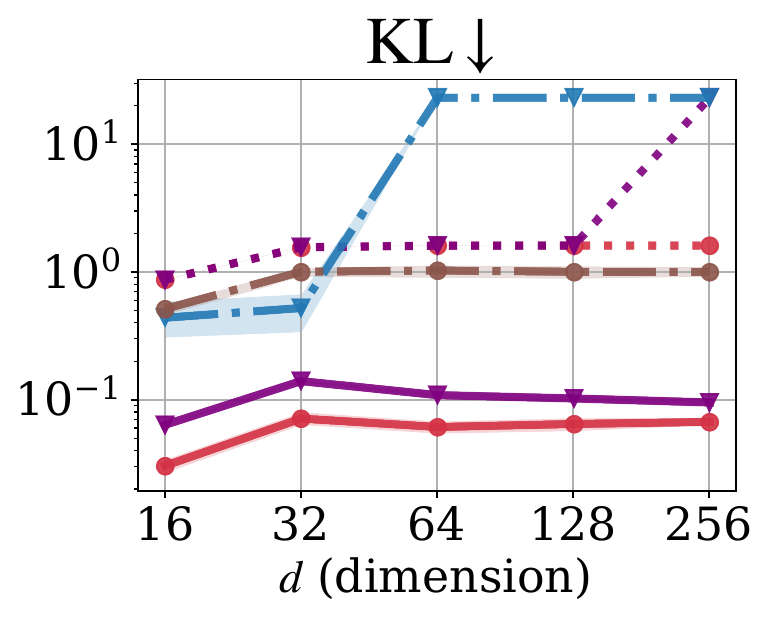}\hfill
    \includegraphics[width=0.25\linewidth]{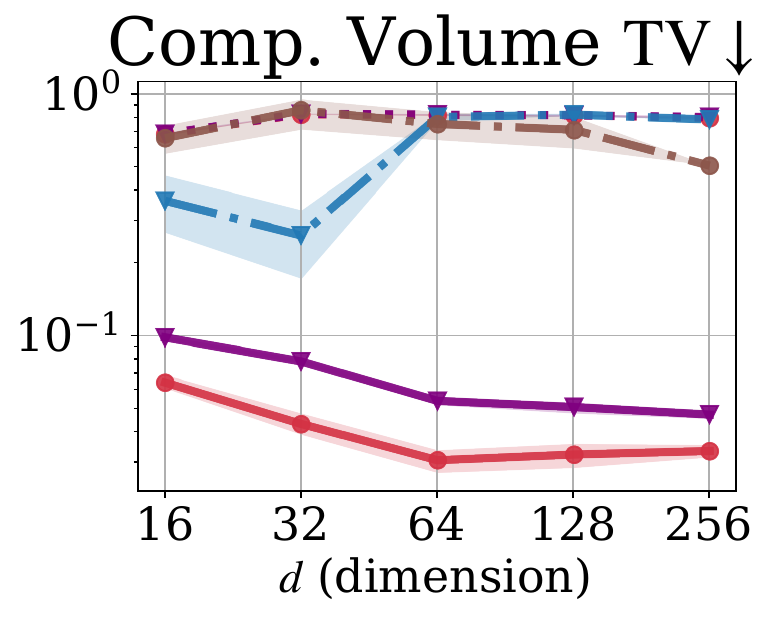}\hfill
    \includegraphics[width=0.25\linewidth]{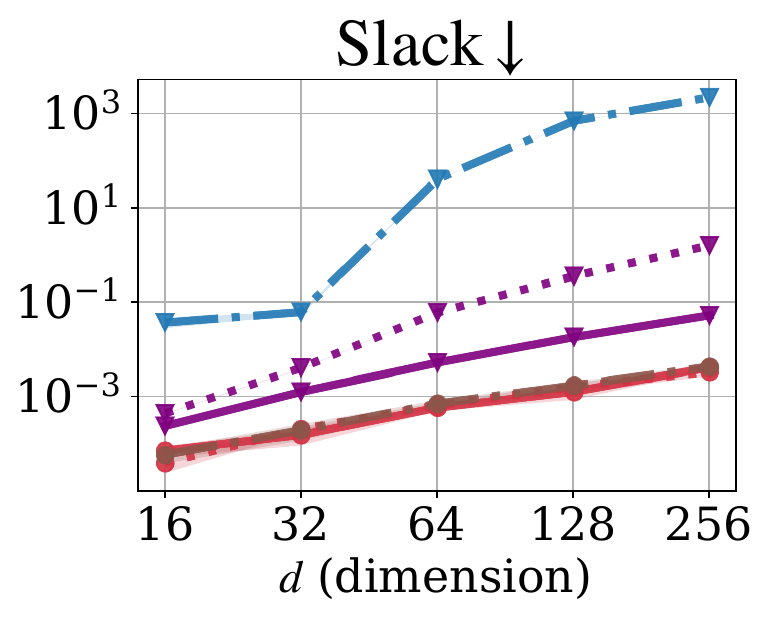}
        \end{subfigure}
    \end{minipage}
\end{minipage}
\begin{minipage}{\textwidth}
    \begin{minipage}{0.05\linewidth}
        \centering
        \rotatebox{90}{\parbox{2cm}{\centering\small Scaling $d$ and $m=d-3$}}
    \end{minipage}
    \begin{minipage}{0.94\linewidth}
        \begin{subfigure}{\linewidth}
        \centering
    \includegraphics[width=0.25\linewidth]{content/figures/scaling/disconnected_stress_2d_sd_proj.pdf}\hfill
    \includegraphics[width=0.25\linewidth]{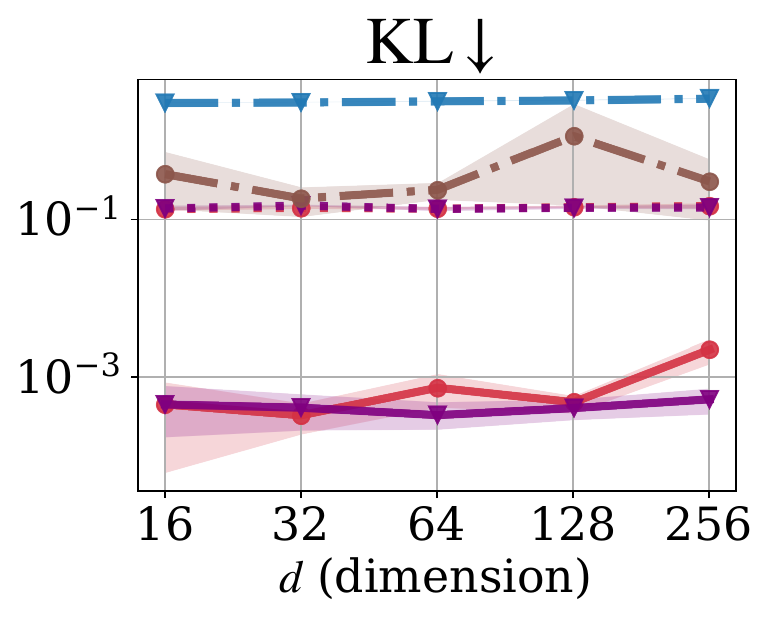}\hfill
    \includegraphics[width=0.25\linewidth]{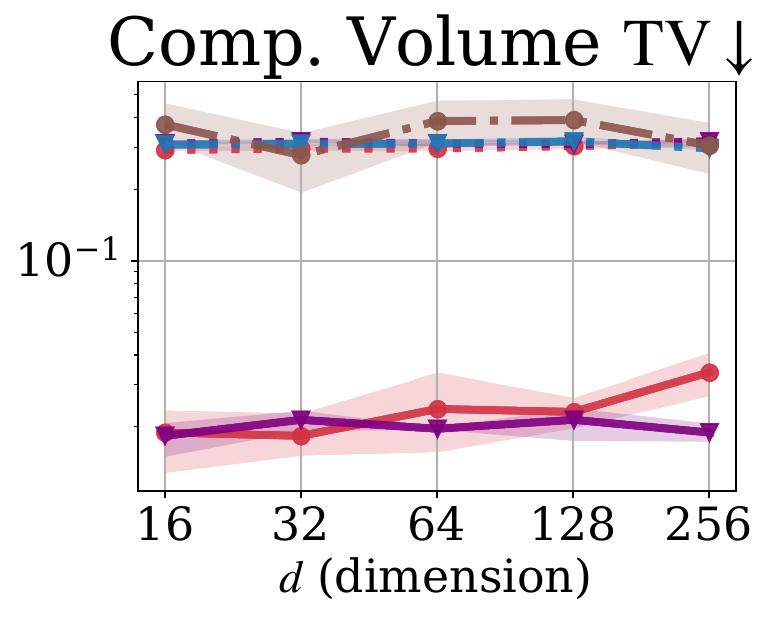}\hfill
    \includegraphics[width=0.25\linewidth]{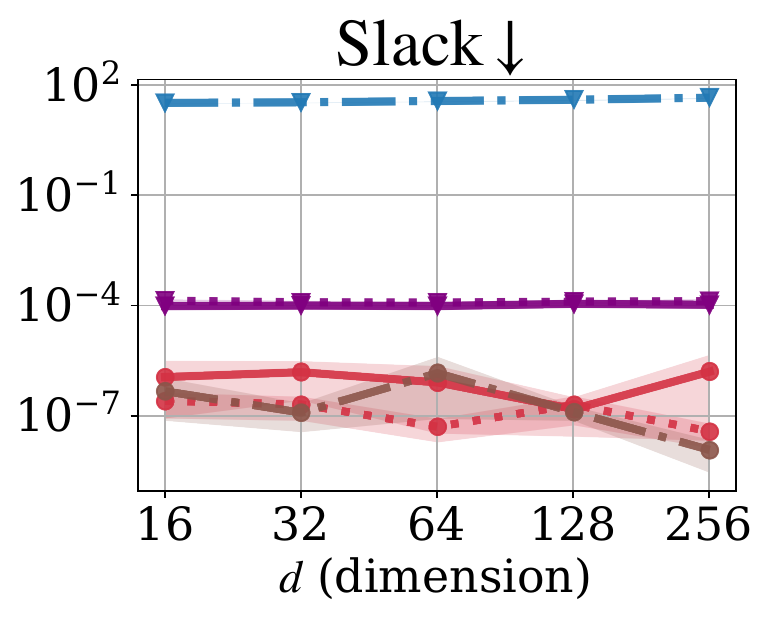}
        \end{subfigure}
    \end{minipage}
\end{minipage}
    
    \caption{Stress test for scaling ambient and manifold dimension (first row) and scaling the ambient dimension while the manifold dimension stays fixed. (second row). Curves depict mean and 95\% CI across 5 seeds. We see that MASEM outperforms alternative approaches by an order of magnitude. Performances are stable across ambient dimensions, indicating that manifold dimension is the main difficulty for samplers. Soft constraint approaches suffer in particular with high-dimensional manifolds.}
    \label{fig:scaling_conv}
\end{figure}

We report the full scaling results in \Cref{fig:scaling_conv}.
The plots report Sinkhorn distance and pairwise distance KL divergence \wrt to ground truth samples.
In addition, we estimate the total variation distance (TV) to the ground truth component volume estimates of the resulting empirical distributions as well as expected maximum constraint violations.
We observe that soft constraint approaches scale poorly on problems with high ambient dimension, likely due to the high number of constraints that must be annealed.
Across dimensions MASEM outperforms its NHR and OLLA counterparts for both problems.

\subsection{Robotics Problems Convergence Curves}
We provide the convergence curves for the robotics problems. In \Cref{fig:mp_conv} we report the convergence for motion planning. In addition to the feasible entropy and the slack, we report the entropy of path homotopy classes. This corresponds approximately to the entropy of the component weights (for more details see \Cref{par:homo-entro}). In \Cref{fig:grasping_conv} we report the convergence curves for the grasping problem.
\begin{figure}[H]
    \centering
    \includegraphics[width=.8\linewidth]{content/figures/standalone_legend.pdf}
\begin{minipage}{\textwidth}
    \begin{minipage}{0.05\linewidth}
        \centering
        \rotatebox{90}{\small Grid Obstacles}
    \end{minipage}
    \begin{minipage}{0.9\linewidth}
        \begin{subfigure}{\linewidth}
            \includegraphics[width=\linewidth]{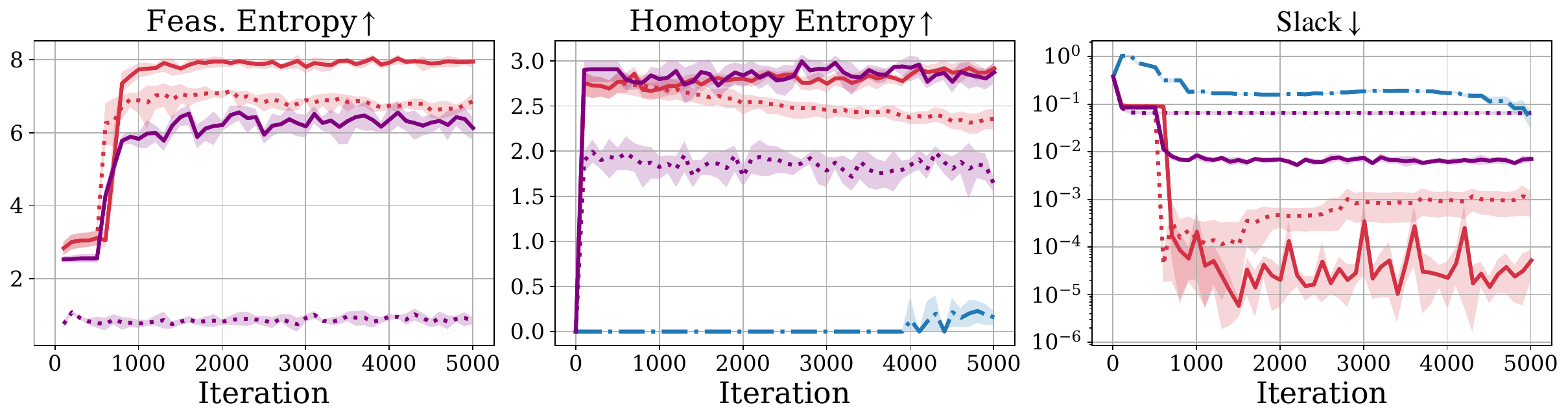}
        \end{subfigure}
    \end{minipage}
\end{minipage}

\begin{minipage}{\textwidth}
    \begin{minipage}{0.05\linewidth}
        \centering
        \rotatebox{90}{\small Random Obstacles}
    \end{minipage}
    \begin{minipage}{0.9\linewidth}
        \begin{subfigure}{\linewidth}
            \includegraphics[width=\linewidth]{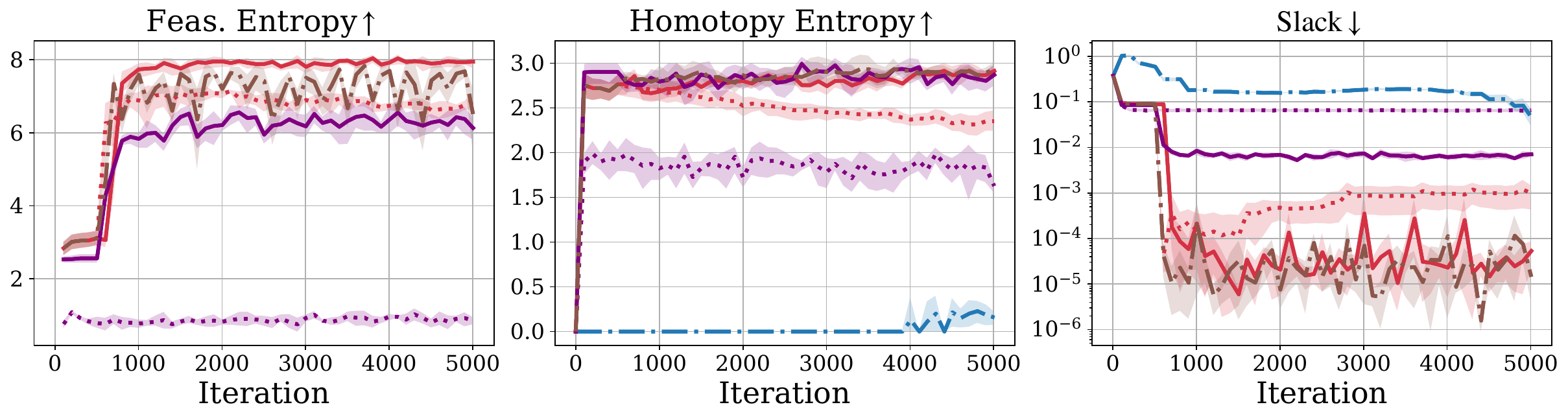}
        \end{subfigure}
    \end{minipage}
\end{minipage}
    \caption{Convergence curves on the motion planning problems with grid obstacles (first row) and random obstacles (second row). Curves depict mean and $95\%$ CI
across 5 seeds.}
    \label{fig:mp_conv}
\end{figure}

\begin{figure}[H]
    \centering
    \includegraphics[width=.8\linewidth]{content/figures/standalone_legend.pdf}
    \includegraphics[width=0.85\linewidth]{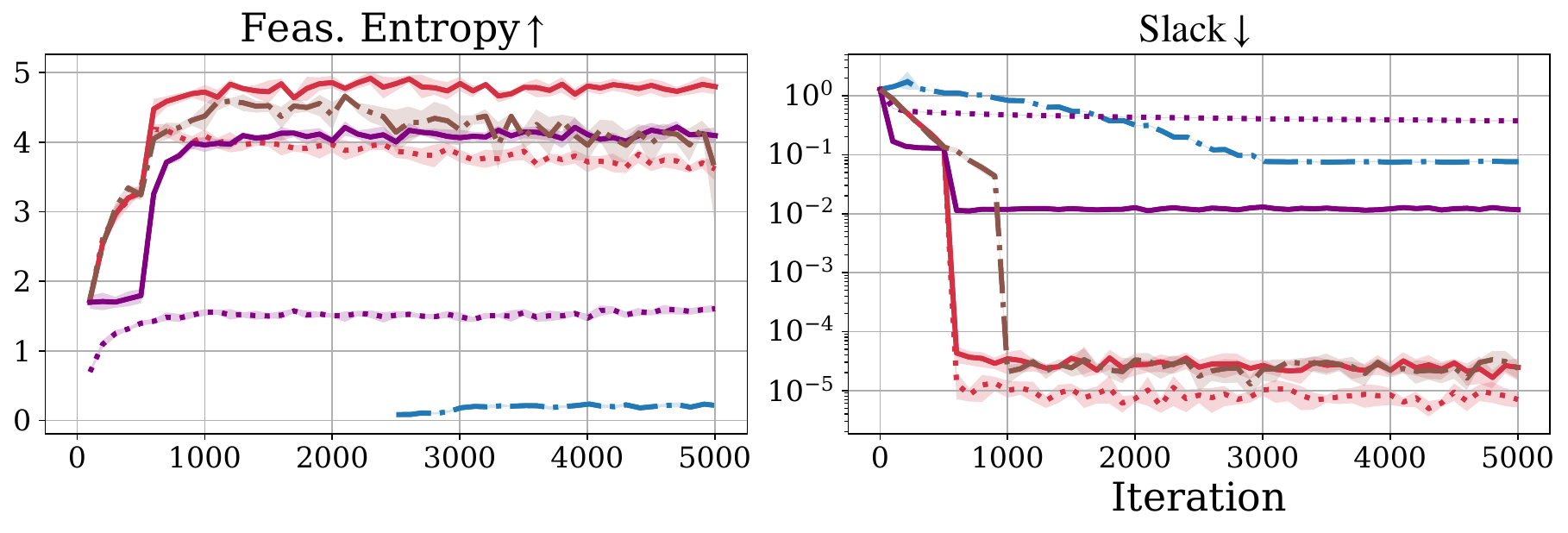}
    \caption{Convergence on the grasping problem. Curves depict mean and $95\%$ CI
across 5 seeds.}
    \label{fig:grasping_conv}
\end{figure}

\section{Additional Experiments}
\begin{figure}[t]
    \centering
        \includegraphics[width=\linewidth]{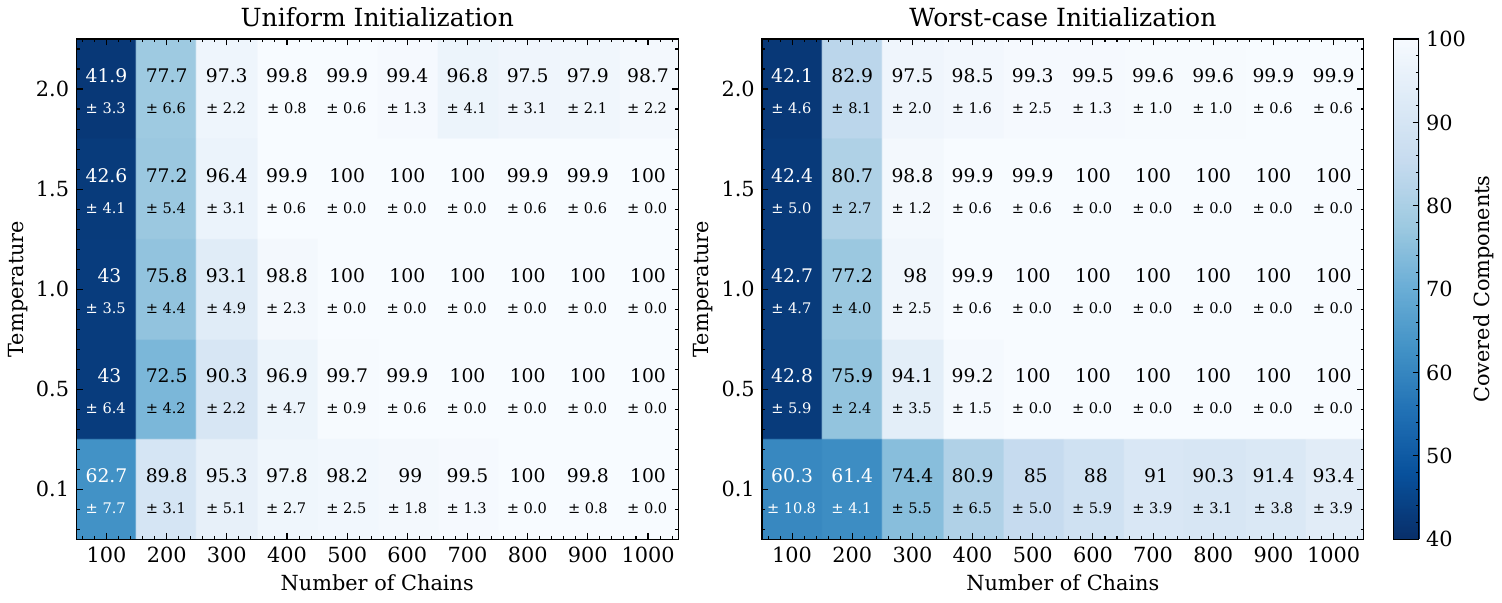}
    \caption{Number of covered components (out of 100), depending on temperature $\tau$ and number of chains $N$. Number include $95\%$ confidence interval.}
    \label{fig:mode_loss}
\end{figure}
\subsection{Component Loss}\label{app:mode-loss}
We investigate the probability that after resampling no chains live within a specific component, which we call component loss. We analyze this phenomenon using a synthetic $2$-dimensional benchmark, containing $100$ disks in a $10\times 10$ grid. Two neighboring disk-centers are $5$ units apart; the disks have radius $1$. We run MASEM-NHR for $T=10$ iterations and count the number of components with at least one chain, averaged over ten runs. Importantly, the maximum step size of NHR is set to $1$ to ensure that the chains only jump between components due to resampling. To initialize, we either distribute the chains uniformly across all components (uniform initialization), or populate every component except for one chosen at random starts with a single chain (worst case initialization).

 \Cref{fig:mode_loss} shows the results of this experiment. We see that for $\tau\in [0.5, 1.5]$, even just $4$ times the amount of components reduces the chance of component loss over ten iterations to $0.1\%$.

\subsection{The Pairwise Distance KL-Divergence as Metric}\label{app:pwd-hist-kl}
\begin{wraptable}{R}{0.4\textwidth}
\sisetup{
  round-mode = figures,
  round-precision = 2,
  text-family-to-math = true,
text-series-to-math = true,
scientific-notation = true,
exponent-mode = threshold,
exponent-thresholds = -4:4,
print-zero-integer=false,
detect-all=true
}
    \centering
    \caption{The table shows the probability that  the null hypothesis \enquote{the metric cannot distinguish between a ground truth and a non-ground truth distribution} holds. It is computed with a Welsh t-test over $1000$ runs. Note that only $\mathrm{KL}$ rejects the null-hypothesis with $p < 0.01$.}
    \begin{tabular}{lS[table-format=1.2e2]S[table-format=1.2e1]}
    \toprule
$\mathcal{D}$ & {$\mathbb{P}(H_{\mathrm{center}}^\mathcal{D})$} & {$\mathbb{P}(H_{\mathrm{edge}}^\mathcal{D})$}\\
\midrule
$\mathrm{ED}$                          & 0.0678501   & 0.684932   \\
$\mathcal{W}_2^2$                      & 0.0134417   & 0.364927   \\
$\operatorname{KL}$ & \bfseries 4.44049e-34 & \bfseries2.62619e-07\\
        \bottomrule
    \end{tabular}
    \label{tab:KL-welsh}
\end{wraptable}
In our reports, we use the (to our knowledge) novel metric of the KL-divergence computed on the histogram of pairwise distances. We define this metric in \Cref{app:metrics}. Here, we demonstrate the ability of this metric to distinguish between distributions with slightly different densities. 
To that end we sample from a disk ($\rho_\mathrm{GT}$), a disk with a higher weight in the center ($\rho_\mathrm{center}$) and a disk with higher weight at the border ($\rho_\mathrm{edge}$). For a divergence $\mathcal{D}$ we pose the null-hypothesis that the divergence does not distinguish between two instances of the ground truth and a non ground truth density and ground truth. More precisely, our hypothesis $H_{\mathrm{center\mid edge}}^\mathcal{D}$ is that $\mathcal{D}(\rho_\mathrm{GT},\rho_\mathrm{GT})\sim\mathcal{D}(\rho_\mathrm{GT},\rho_\mathrm{center\mid edge})$ holds. 

To measure this, we generate two sets $S_\mathrm{GT1}, S_\mathrm{GT2}$ as well as $S_\mathrm{center}$ and $S_\mathrm{edge}$.  Each set contains $2000$ samples. For $S_\mathrm{center}$, we sample $50$ points of those within a radius of $0.5$ of the center instead of the ground truth. For $S_\mathrm{edge}$, we instead sample $50$ points with radius $r\in [0.5,1.0]$. With these samples we calculate $\mathcal{D}(S_\mathrm{GT1},S_\mathrm{GT2})$, $\mathcal{D}(S_\mathrm{GT1},S_\mathrm{center})$, and $\mathcal{D}(S_\mathrm{GT1},S_\mathrm{edge})$. We repeat this process $1000$ times and use the results to perform a Welsh t-test to compute the probability $p$ of rejecting $H_{\mathrm{center}}^\mathcal{D}$ and $H_{\mathrm{edge}}^\mathcal{D}$ respectively.

We compare the energy distance\begin{equation*}
    \mathrm{ED}(X, Y) \coloneqq\left(\frac{2}{nm} \sum_{i, j} \lVert x_i - y_j \rVert - \frac{1}{n^2}\sum_{i, j} \lVert x_i - x_j \rVert - \frac{1}{m^2}\sum_{i, j} \lVert y_i - y_j \rVert\right)^\frac12,
\end{equation*}
used, for example, by \citet{jeon2025fast}, the Sinkhorn distance $\mathcal{W}_2^2$ and our metric ($\operatorname{KL}$). We report the results in \Cref{tab:KL-welsh}. The KL divergence is the only metric which rejects the hypothesis that it cannot distinguish between ground truth and the other densities with $p<0.01$.

\section{Experimental Details}\label{app:exp-details}
Each experiment is repeated across 5 different seeds. 
Where applicable, we report the mean across all 5 runs and 95\% confidence intervals in our plots.
Where reported we test statistical significance using t-tests and Holm-Bonferroni using SciPy.
In the following, we provide details about the benchmarks and implementation that we used in this work.
All the experiments are performed on a single workstation with a NVIDIA RTX PRO 6000 Blackwell GPU, a AMD Ryzen Threadripper PRO 7955WX CPU with 16 cores, and $128$ gigabyte RAM.
All algorithms, benchmarks and metrics are implemented in JAX~\citep{jax2018github}.\footnote{We will provide the code upon publication.}
The 2d problems are depicted in \Cref{fig:2dbenchs}. 
We will provide mathematical and implementation details on the metrics, the benchmarks and their ground truth samplers (if available) below. After that, we provide details and hyperparameters for the method.

\subsection{Metrics}\label{app:metrics}
The main challenge in comparing different sampling methods is measuring the sampling quality, that is, how well the samples represent the actual distribution. We try to tackle this problem with multiple metrics. For all benchmarks where a ground truth is available, we report the Sinkhorn distance $\mathcal{W}_2^2$. Additionally, we report the average maximum slack violation $\mathbb{E}_x[\max \{|h(x)|, g^+(x)\}]$ for all problems. 

For benchmarks with ground truth, we additionally report the KL divergence of pairwise distance histograms. If no ground truth is available, we instead report the feasible entropy. For the scaling benchmarks we calculate the TV distance between the empirical and the uniform component masses. In the path planning problem, we additionally report the entropy of the homotopy-class membership distribution. All these metrics are defined in more detail below. 

\paragraph{KL divergence.} If a ground truth distribution is available, we report the KL divergence on the histogram of pairwise distances. This is well suited to detect small changes in component mass allocation (\Cref{app:pwd-hist-kl}). For this metric, we compute the pairwise distances of the ground truth distribution as well as the approximate distribution. With this, we compute the normalized histogram with $50$ bins for values clipped between $[0, b]$, where $b =\max_{x,y\in\Sigma}\lVert x-y\rVert_\infty$, i.\,e.~the maximum distance of feasible points along any dimension. We then calculate the forward KL divergence between the ground truth and the approximate distribution.

\paragraph{Feasible Entropy} If no ground truth is available, we report the feasible entropy, which is the estimated entropy of the feasible samples multiplied with the fraction of feasible samples. This metric serves as a proxy to measure whether we achieve our goal of entropy maximization. We compute 
\begin{equation*}
    \hat H_\mathrm{feas} = \frac{\sum 1[\operatorname{Slack}(x)<\operatorname{tol]}}{N} 
    \sum_{k=2,4,8} 
    \sum_{C_i\subseteq\mathfrak{C}_\mathrm{feas}}   \hat H(C_i, k)\,,
\end{equation*}
where $\mathfrak{C}_{\operatorname{feas}}$ is the set of feasible samples, $C_i$ contains $100$ samples drawn without replacement from $\mathfrak{C}_{\operatorname{feas}}$ for $i=1,\dots 10$ and $\hat H(C_i, k)$ denotes the Kochzachenko-Leonenko Entropy estimate of $C_i$, estimated using the $k$-th nearest neighbor \citep{kozachenko1987sample}. 
If less then $100$ feasible samples are available, we do not compute this metric. We choose to subsample $100$ points to compare entropy across methods with potentially different numbers of feasible samples, since the estimator depends on the number of samples.

\paragraph{TV-Distance} 
We introduce a new metric for the scaling problems, since the other metrics might struggle with high-dimensional samples. For this metric we count the number of samples contained in each component (up to a small tolerance of $10^{-5}$)\todo{true?} and  normalize it. This yields the component weights $\alpha_c$ as introduced in \Cref{sec:analysis}. We then compute the total variation distance to the ground truth $\alpha^*_c$ as \begin{equation*}
    \mathrm{TV}=\max_{c=1,\dots,C}\lvert \alpha_c -\alpha_c^*\rvert
\end{equation*} 

\paragraph{Homotopy-Entropy.}\label{par:homo-entro}For the motion-planning problems, we additionally measure the entropy of the distribution of paths among their homotopy-classes. Two paths belong to different homotopy classes if they cross an obstacle on different sides. This directly corresponds to the components of the manifold. More formally, we consider the space $W=([-4,4]^2\setminus O) / \sim$, where $O=\bigcup_i O_i$ is the union of obstacles and $(x_i,y_i)\sim(x_j,y_j)$ iff $x_i=x_j=3.6$, which is the vertical goal line. We then consider homotopy classes of paths on this space. Note that, since we constrain paths to have increasing $x$-coordinates, we only observe a finite subset of the hompotopy classes. In practice, we compute a signature for each path by testing whether it passes above or below for each obstacle. This results in a $\lvert \mathrm{obs}\rvert$ dimensional vector. We count the occurrences of each signature and compute the entropy of their distribution. Note that while this is a useful metric, maximizing it does not directly correspond to maximizing the entropy of paths. By computing the entropy over the signatures, we implicitly assume that each component has the same weight. While not true in practice, this is still a useful proxy, since the actual component weights are unknown a priori.

\subsection{Synthetic Problems}\label{app:synth-constraints}
For each low-dimensional example, we run $2\,000$ independent chains for $5\,000$ steps. 
From each chain, we retain only the last state, yielding $2\,000$ samples.

\paragraph{Connected \& Disconnected Disks}
We consider two randomly rotated disks embedded on a sphere in $\RRR^3$.
\begin{align*}
h(x) &= \lVert x \rVert_2 - R, \qquad R=2.5;\\
g(x) &= \min_{i\in\{1,2\}}\left\{
\cos(\rho_i) - R\langle x, \mu_i\rangle
\right\}\,.
\end{align*}
We use two different parameters $\rho_1 = 0.2$, $\rho_2 = 0.6$ such that both disks have different areas. 
In the connected case the disks have a arc-length of $\delta=0.6$ between their centers; in the disconnected case we use $\delta=1.35$. We choose the disk centers \begin{equation*}
\begin{aligned}
    \mu_1&=R_x\left(\cos-\tfrac\delta2, \sin-\tfrac\delta2,0\right)^T\\
    \mu_2&=R_x\left(\cos\tfrac\delta2, \sin\tfrac\delta2,0\right)^T
\end{aligned}
    \quad \text{where } R_x=\begin{bmatrix}
       1  & 0  & 0\\
        0 & t &t \\
        0 & t &t
    \end{bmatrix}\in\operatorname{SO}(3),\quad t=\frac{\sqrt2}2
\end{equation*}is the rotation of $45$ degrees around the x-axis.
In addition we impose bound constraints $x \in [-5,5]^3$.

Ground-truth samples are generated by exact surface-uniform sampling from the union of the two disks, with a correction for possible overlap.
We first choose disk $i\in\{1,2\}$ with probability proportional to its spherical area,
$\mathbb P(i) = \tfrac{1-\cos(\rho_i)}{\sum_{j=1}^2 (1-\cos(\rho_j))}$.
Given disk $i$, we sample uniformly on its area by drawing samples from
\begin{equation*}
    \phi\sim \operatorname{Unif}(0,2\pi), \qquad \cos\alpha\sim \operatorname{Unif}(\cos\rho_i,1),
\end{equation*}
and setting
$
x = R\left[ \cos\alpha\,\mu_i + \sin\alpha \left( \cos\phi\,e_{i,1} + \sin\phi\,e_{i,2} \right) \right]$.
To correct for overlap, proposals are accepted with probability $1/m(x)$, where $m(x)$ is the number of disks containing $x$.

\paragraph{Seven Lobes}
We adapt this problem from \citet{jeon2025fast} and replace the Gaussian mixture target with constant target $f(x) = 0$ to sample uniformly.
\begin{align*}
h(x) &= \sqrt{x_1^2+x_2^2} - \left(3+\cos(7\theta)\right), \qquad \theta=\operatorname{atan2}(x_2,x_1);\\
g(x) &= (x_1-2)^2 - 5x_1x_2^3 + \frac{1}{2}x_2^5 - 40.
\end{align*}
Additionally, we impose bound constraints such that $x \in [-4.1, 4.1]^2$.
We use rejection sampling from the polar parametrization of the equality constraint to generate ground truth samples. 
Specifically, we sample
\begin{equation*}
\theta \sim \operatorname{Unif}(0,2\pi),
\qquad
r(\theta)=3+\cos(7\theta),
\qquad
x(\theta)=r(\theta)(\cos\theta,\sin\theta).
\end{equation*}
We correct the acceptance rate with a Jacobian correction that accepts samples with probability proportional to
\begin{equation*}
    \frac{ds}{d\theta}
    =
    \sqrt{r(\theta)^2+\left(\frac{dr}{d\theta}\right)^2},
    \qquad
    \frac{dr}{d\theta}=-7\sin(7\theta).
\end{equation*}

\paragraph{Sine}
We define a uniform target supported on a one-dimensional sine curve in $\RRR^2$. 
The constraints are
\begin{align*}
    h(x) &= x_2 - \exp(-0.15x_1)\sin(x_1);\\
    x_2 &\geq 0.
\end{align*}
We additionally impose bound constraints such that $x\in[-20,20]^2$.
We generate ground truth samples using rejection sampling from the parametrization of the equality constraint similar to the seven lobes procedure above. 
Specifically, we sample
\begin{equation*}
t \sim \operatorname{Unif}(-20,20), \qquad x(t)=\left(t,\exp(-0.15t)\sin t\right).
\end{equation*}
Since uniform sampling in $t$ is not uniform \wrt arclength, we correct the acceptance probability with the Jacobian.
We retain only accepted proposals satisfying the halfspace and bound constraints. 

\paragraph{Swiss Roll}
We define a uniform target on a disconnected one-dimensional manifold in $\RRR^2$ consisting of six circles and one Archimedean spiral segment.
The circle components are
\begin{equation*}
    h_k(x)=\lVert x-c_k\rVert_2-r_k,\qquad k=1,\ldots,6,
\end{equation*}
where the centers $c_k$ and radii $r_k$ are sampled once with layout seed $0$,
using non-overlapping circles with radii in $[0.45,1.15]$.
The spiral is parametrized by
\begin{equation*}
    \gamma(t) = c_{\mathrm{sp}} + (a+bt)(\cos t,\sin t), \qquad t\in[t_{\min},t_{\max}],
\end{equation*}
with
\begin{equation*}
    c_{\mathrm{sp}}=(4.8,-0.4),\qquad
    a=0.45,\qquad
    b=0.33,\qquad
    t_{\min}=0.9,\qquad
    t_{\max}=3.9\pi.
\end{equation*}
The equality constraint is the signed residual of the closest circle or valid
spiral branch. 
We additionally impose
\begin{equation*}
    g(x)= \operatorname{softplus}(-3-x_1) \operatorname{softplus}(-x_2) - 0.1,
\end{equation*}
as well as box constraints $x\in[-10,10]^2$.
Ground-truth samples are generated by first choosing a circle or the spiral with probability proportional to its parameter-domain length, i.e. $2\pi$ for each circle and $t_{\max}-t_{\min}$ for the spiral. 
For circles we sample $\phi\sim\operatorname{Unif}(0,2\pi)$ and set $x=c_k+r_k(\cos\phi,\sin\phi)$, correcting by the arclength Jacobian. 
For the spiral we sample $t\sim\operatorname{Unif}(t_{\min},t_{\max})$ and set $x=\gamma(t)$, correcting by $\frac{ds}{dt}=\sqrt{(a+bt)^2+b^2}$.

\paragraph{Scaling Stress Test}
In the synthetic stress-test benchmark we target an uniform disconnected manifold embedded in $\RRR^d$.
We use $m$ to denote the total number of equality constraints.
Of these, $m-1$ are linear equality constraints, which define a random subspace via
\begin{equation*}
    A x = b,
\end{equation*}
where $A\in\RRR^{(m-1)\times d}$ has orthonormal rows and is sampled randomly once.
In all stress-test experiments, we set $b=0$.
Let
\begin{equation*}
    p = d - (m-1) = d-m+1
\end{equation*}
denote the dimension of the nullspace of $A$, and let $N\in\RRR^{d\times p}$ be an orthonormal basis of this nullspace.
We write the corresponding latent coordinates as
\begin{equation*}
    z = N^\top x .
\end{equation*}

Within this subspace we then randomly sample $|C|=5$ disconnected hyperspheres which define the manifold on which we sample via equality constraints.
For centers $c_i\in\RRR^p$ and radii $r_i$, we use
\begin{equation*}
    h(x)
    =
    \left\lVert z-c_{i^\star}\right\rVert_2^2-r_{i^\star}^2,
    \qquad
    i^\star
    =
    \argmin_{i\in\{1,\dots,5\}}
    \left|
        \left\lVert z-c_i\right\rVert_2^2-r_i^2
    \right|.
\end{equation*}
Since each sphere has dimension $p-1$, the intrinsic manifold dimension is
\begin{equation*}
    p-1 = d-m .
\end{equation*}
The base radius is set to $0.25$, and the component radii are multiplied by independent jitter factors in $[0.5,1.5]$.
The component centers are placed randomly in latent space using rejection sampling to ensure separation between components.

We additionally include exactly $l=5$ spherical inequality constraints, which act as ball cutouts on the sphere components.
Each cutout center is placed on the surface of a randomly selected component, and its radius is set to half of the corresponding component radius.
For cutout center $o_j$ and radius $\rho_j$, the inequality is
\begin{equation*}
    g_j(x)
    =
    \rho_j^2 - \left\lVert z-o_j\right\rVert_2^2,
    \qquad j=1,\dots,5,
\end{equation*}
so that feasible samples lie outside all cutout balls.
Finally, we impose box constraints $x\in[-28,28]^d$.

We compare two main scaling regimes: increasing ambient dimension with fixed manifold dimension for $m=d-3$ and increasing ambient dimension with increasing manifold dimension for fixed $m=5$.
In this case, the intrinsic dimension grows with the ambient dimension and is given by $d-m=d-5$.

Ground-truth samples are generated directly from the latent representation.
We first sample a component index with probability proportional to its surface area,
\begin{equation*}
    \mathbb P(i)
    =
    \frac{r_i^{p-1}}{\sum_{j=1}^5 r_j^{p-1}}.
\end{equation*}
Given the component, we sample a direction $u$ uniformly on the unit sphere $\mathbb S^{p-1}$ by normalizing a standard Gaussian vector, and set
\begin{equation*}
    z = c_i + r_i u,
    \qquad
    x = Nz .
\end{equation*}
Since all stress-test experiments include ball cutouts, we use rejection sampling from this proposal and retain only feasible samples.

\subsection{Motion Planning}\label{sec:mp_details}
We consider 2D motion planning problems in which a point robot must navigate from a fixed start configuration $q_0 \in \RRR^2$ to a goal region while avoiding circular obstacles.
We parametrize each path through $n_w$ waypoints $X = (x_1,\dots,x_{n_w}) \in \RRR^{n_w \times 2}$, which form the sampling domain.
Following standard robotics convention, we recover smooth trajectories as cubic B-splines whose control points are obtained from the start and waypoints via $C = \Phi^{-1}[q_0; X]$, using the pseudoinverse of the interpolation matrix $\Phi \in \RRR^{T \times (n_w+1)}$.
Evaluating the resulting curve at $T$ equally spaced parameter values yields a discretized path $p \in \RRR^{T\times 2}$.
The goal is a vertical line on the right side of the workspace, which we impose as an equality constraint on the first coordinate of the final waypoint:
\begin{equation*}
    h^{\text{goal}}(X) = x_{n_w,1} - 3.6,
\end{equation*}
leaving the second coordinate free so any point on the goal line is admissible.

The main challenge of this domain is navigating around obstacles, where multiple homotopically distinct paths are admissible.
Collision constraints are enforced at each path sample via
\begin{equation*}
    g^{\text{coll}}_{j,t}(p) = r_j^2 - \lVert p_t - c_j \rVert_2^2, \qquad t=1,\dots,T,
\end{equation*}
for each obstacle $j$ with center $c_j$ and radius $r_j$.
We additionally impose monotone progress along the $x$ axis,
\begin{equation*}
    g^{\text{mon}}_t(p) = p_{t-1,1} - p_{t,1}, \qquad t=1,\dots,T,
\end{equation*}
with $p_0 \equiv q_0$, which removes looping and backwards-going trajectories.
We further impose discrete velocity and acceleration limits to suppress paths with large gaps or sharp kinks, both of which are known failure modes when the path is reconstructed from sparse waypoints:
\begin{align*}
    g^{\text{vel}}_t(p) &= \lVert p_t - p_{t-1} \rVert_2^2 - s^2_{\max}, & s_{\max} &= 1, \\
    g^{\text{acc}}_t(p) &= \lVert p_{t+1} - 2p_t + p_{t-1} \rVert_2^2 - a^2_{\max}, & a_{\max} &= 0.65.
\end{align*}
Finally, all waypoint coordinates are bound-constrained to $[-4, 4]^2$.
Since each obstacle splits the configuration space into a homotopy class passing above and one passing below, the feasible set decomposes into up to $2^{|\text{obs}|}$ connected components (modulo monotonicity and feasibility pruning), which we use to evaluate homotopy-class entropy. For more details see \Cref{par:homo-entro}.

\paragraph{Grid Layout.}
In the grid problem, obstacles are placed on a regular $4 \times 4$ grid, that is centered within the domain. 
All obstacles share a common radius $r=0.5$. 
In total, this induces $m=3$ equality and $l=771$ inequality constraints.
This setup yields a structured benchmark with a known and combinatorially large number of homotopy classes, which is well suited for testing whether a sampler covers all feasible passages between obstacle columns or collapses onto a few preferred routes.

\paragraph{Random Layout.}
In the random problem, we place 20 disk obstacles with radii drawn independently from $\operatorname{Unif}[0.2, 0.5]$ and center coordinates drawn uniformly over the domain.
We employ rejection sampling to guarantee minimal separation between obstacle pairs to increase the number of feasible paths.
In total, this induces $m=3$ equality and $l=931$ inequality constraints.
The random layout induces an irregular and often narrower set of feasible corridors than the grid, reflecting real world robot navigation challenges.

\subsection{Grasping}
We want to grasp a capsule, which is a cylinder with a halfsphere with the same radius as the cylinder on each end. This shape is commonly used for collision checking. The capsule is placed at the origin $(0,0,0)$. Its orientation is given by a vector along its long axis, pointing towards $o=(1,1,1)$. It has the length $l=1$ and the radius $r=0.25$. Gravity impacts this capsule with the force-vector $f_g = (0,0,1)$. Its friction coefficient is $\mu=1$. \Cref{fig:grasp_diagram} shows the capsule and a grasp schematically.

\begin{wrapfigure}{R}{0.3\textwidth}
    \centering
    \includegraphics[width=0.8\linewidth]{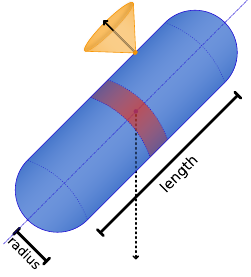}
    \caption{Schematic diagram visualizing the capsule in $2$d. The red band in the middle represents the infeasible region. In the top left, a finger contact is shown. The orange dot is the point of attack on the capsule surface. The orange cone shows the friction cone and the black arrow the surface normal. The dotted black arrow in the middle symbolizes gravity.}
    \label{fig:grasp_diagram}
\end{wrapfigure}

We grasp the capsule with three fingers. The fingers as hard point-fingers with friction \cite{murray1994mathematical}. This means each finger $i$ is parametrized by a point of attack $p_i\in\mathbb{R}^3$ as well as a force-vector $f_i\in\mathbb{R}^3$. To ensure the capsule is grasped and stable, the sum of forces of the fingers has to counter gravity, and the sum of torques at it's center $\tau_i = p_i\times f_i$ has to be zero. This yields the constraints
\begin{align*}
    f_1 + f_2 + f_3 &= f_g\\
    \tau_1 + \tau_2 +\tau_3 &= 0\ .
\end{align*}

For each finger $i$, we calculate a vector pointing from the middle axis of the capsule towards the finger as \begin{equation*}
    d_i = \operatorname{clamp}\bigl(\langle o,p_i\rangle, -l/2, l/2\bigr)o - p_i\,.
\end{equation*}
This allows us to ensure that the point of attack lies on the capsules surface by constraining $\lVert d_i\rVert= r$.
Furthermore, each finger can exert a force deviating from the normal of the capsules at the point of attack $n_i=d_i / r$ as far as friction allows. This is expressed in the force cone constraint
\begin{equation*}
\lVert f_i - n_i\langle n_i, f_i\rangle\rVert\leq \mu \langle -n_i, f_i\rangle\,.
\end{equation*} 
We constrain the force each finger can apply to $\lVert f_i\rVert\in[0.1,1]$. Lastly, we introduce an infeasible region on the capsule. This is meant to model objects that can't be grasped at arbitrary points. We choose a band across the middle of the cylinder with width $w=0.25$. This results in the constraint $\rvert\langle o,p_i\rangle\lvert\geq w/2$.

For each finger, we have $6$ dimensions, $1$ equality, and $4$ inequality constraints. We enforce $6$ equalities in the force and torgue constraints. Additionally, we constrain each entry of the points of attacks and forces to be between $-1$ and $1$. Overall, this results in $18$ dimensions, $9$ equalities and $48$ inequalities.

\subsection{Implementation Details}\label{app:impl-details}
All methods are implemented in JAX~\citep{jax2018github}.
We implement all baselines based on their official implementation: NHR is based on the code of \citet{toussaint2024nlp}, OLLA based on \citet{jeon2025fast} and SCMC on \citet{golchi2016sequentially}.
For OLLA, we use the OLLA-H variant which employs Hutchinson's trace estimator.

\paragraph{Initialization.} 
We initialize all sampling runs from a Gaussian in ambient space which we center at the origin and scale by a quarter of the distance between minimum and maximum bound for that problem.
Following \citet{toussaint2024nlp}, we implement the initial projection for NHR and MASEM-OLLA using the Gauss-Newton method for 500 steps with additional standard Gaussian noise that we scale by $\epsilon=0.01$.
While this is inconsistent with theory, we find that it improves mode coverage of the initialization and yields feasible samples on every problem that we tested on.

\paragraph{MASEM Details.}
In \Cref{sec:analysis}, we assume that the local sampling kernel preserves sample feasibility. 
Since practical samplers tend to break this property resampling naively can increase the number of infeasible samples as those tend to have largest distance to the samples on the manifold.
To prevent this from happening, we apply a penalty from the resampling weights:
\begin{equation*}
    w_i \gets w_i\frac1{\exp\!\big(\mu\operatorname{Slack}(x_i)\big)}
\end{equation*}
We choose $\mu=1000.$ as default and find that this offers a robust choice.
Further, we ensemble multiple values for $k$ to compute the weights~\citep{laskin2022cic, 2019berret}.
Specifically for $k=3$ we would compute all distances $\varepsilon_{i,1}, \varepsilon_{i,2}, \varepsilon_{i,3}$ and take the average over their resulting kNN estimators $\bar{w}_i = \tfrac{1}{k}\tsum_{j=1}^k\varepsilon_{i,j}$.
In practice, we find that this approach tends to have lower variance and is less sensitive to the exact choice of $k$ despite introducing bias.

\paragraph{Hyperparameters}
We tune the main hyperparameters of each method and baseline using Bayesian Optimization \cite{snoek2012practical}, implemented in the wandb api.\footnote{\url{https://docs.wandb.ai/models/sweeps}}
For each method we allocate the same budget of 20 tuning runs which we tune against the Sinkhorn distance on problems with ground truth samples and against the feasible entropy on practical problems.
The MASEM hyperparameters for all problems are listed in \Cref{tab:masem_hparams}.

\begin{table}[ht]
    \centering
    \sisetup{
print-zero-integer=false
}
\caption{Hyperparameters for MASEM-NHR and MASEM-OLLA. Step sizes and other sampler-related hyperparameters are taken from the corresponding base sampler. Hyperparameters are the temperature $\tau$, the number of mixing steps $M$ and the considered neighbour $k$.}
    \begin{tabular}{lS[table-format=1.2]S[table-format=1.0]S[table-format=1.0] c
    S[table-format=1.2]S[table-format=1.0]S[table-format=1.0]}
      \toprule
      & \multicolumn{3}{c}{MASEM-NHR} & & \multicolumn{3}{c}{MASEM-OLLA} \\
      \cmidrule(lr){2-4} \cmidrule(lr){6-8}
      Problem & {$\tau$} & {$M$} & {$k$} & & {$\tau$} & {$M$} & {$k$} \\
      \midrule
      Connected Disks & 1.0 & 50 & 4 & & 1.0 & 50 & 4 \\
      Disconnected Disks & 1.0 & 50 & 4&  & 1.0 & 50 & 4 \\
      Seven Lobes & 0.81 & 5 & 8 & & 0.67 & 5 & 8 \\
      Sine & 0.75 & 5 & 16 & & 0.98 & 10 & 16 \\
      Swiss Roll & 0.65 & 5 & 16& & 0.5 & 50 & 8 \\
      Stress Test ($m=d-3$) & 0.3 & 5 & 16 & & 0.3 & 5 & 16 \\
      Stress Test ($m=5$) & 0.74 & 20 & 20 & & 0.3 & 5 & 16 \\
      \midrule
      Motion Planning Grid & 0.65 & 50 & 8 & & 0.65 & 50 & 8 \\
      Motion Planning Random & 1.0 & 50 & 8 & &0.935 & 50 & 4 \\
      Grasping & 0.41 & 5 & 19 & & 0.94 & 10 & 16\\
      \bottomrule
    \end{tabular}
    \label{tab:masem_hparams}
\end{table}


\section{Limitations}\label{app:limitations}
\subsection{Theoretical Limitations}
Some results, namely \Cref{lem:inital-populated}, \Cref{thm:main}, and \Cref{cor:convergence}, are valid only in the large-scale particle limit, i.e.~for $N\to\infty$. Furthermore, the main result in \Cref{thm:main} is only stated in the mean-field. It might happen that no particles belonging to a particular component are sampled, resulting in that component vanishing.
However, the probability of this occurrence decreases exponentially with the number of samples and our experiments show this effect is negligible even for just $4$ chains per component, as we demonstrated in \Cref{app:mode-loss}.

\subsection{Runtime Overhead}
While small, our method introduces some runtime overhead compared to only using constrained samplers like NHR: every $M$ steps, we compute the $k$-nearest neighbor distances and resample. This is also reflected in the asymptotic runtime. Assuming each iteration of the constrained sampler is constant in the number of samples, we increase the asymptotic runtime from $\mathcal{O}(NTM)$ to $\mathcal{O}(NT(M+\log N)$ when using the proposed resampling.\footnote{Using approximate nearest neighbors, one can determine the k-nearest neighbor distance in $\mathcal{O}(N\log N)$ \citep{harwood2025approximate}. Furthermore, with binary search, the resampling step can be executed in $\mathcal{O}(N\log N)$ as well}
In practice however, we discovered that on our benchmarks this overhead is small, while applying MASEM greatly enhances sample quality.

\subsection{Non-Uniform Sampling}
This work focuses on entropy maximization, which results in uniformly sampling the feasible set. This limits it's usefulness when non-uniform samples are needed. If that is the case, we suggest either applying MASEM and using rejection sampling afterwards, or using the output of MASEM as an unbiased initialization for a Sequential Monte Carlo sampler \citep{andrieu2003introduction}.

\subsection{Parameter Selection}
As shown in \cref{thm:main,cor:convergence}, the convergence of the algorithm depends on the temperature $\tau$, which should be chosen to be smaller than the intrinsic dimension of the manifold. For many applications, the intrinsic dimension is not known a priori, making it harder to select a suitable parameter. We suggest a few mitigation strategies. First, if the available compute resources allow, one could test multiple different parameters and compare the resulting entropy, as we did for tuning the robotics benchmarks. Secondly, especially in robotics, it is feasible to obtain a rough estimate of the intrinsic dimension by reasoning over the degrees of freedom of the problem. This could then be used as an upper bound. Lastly, if both strategies are infeasible, we suggest using a conservative temperature (for example, $\tau=0.9$) and increasing the number of iterations $T$.

\section{Broader Impact}\label{app:impact}
This work is foundational, and we expect no direct societal impact. Nevertheless, as described in \cref{sec:intro}, constrained sampling has many applications in robotics. An important concern when deploying robots in the real world is safety, especially when they are controlled by a neural network. We believe safe operation can be encouraged during training by sampling from a constrained distribution, where the constraints ensure, for example, that the robot does not collide with the environment. The method presented here should be only one of many components of a holistic system for safe robot operation. While we hope it can play a small part in improving robotic operations, we believe that its direct broader impact is limited.

\section{List of Acronyms}
\printacronyms[display=all,heading=none,template=tabularx]
\FloatBarrier

\end{document}